\documentclass{article} % For LaTeX2e
\usepackage{iclr2026_conference,times}

\usepackage{hyperref}
\usepackage{url}
\usepackage{fvextra}

\usepackage{microtype}
\usepackage{hyperref}
\usepackage{url}
\usepackage{booktabs}
\usepackage{multirow}
\usepackage{graphicx}
\usepackage{amsmath,amssymb}
\usepackage[table]{xcolor}
\usepackage{caption}
\usepackage{pifont}
\usepackage{titletoc}

\newcommand{\cmark}{\ding{51}}
\newcommand{\xmark}{\ding{55}}

\usepackage{algorithm}
\usepackage{algpseudocode}
\usepackage{subcaption}  % preamble
\usepackage{wrapfig}
\usepackage{tcolorbox}
\tcbuselibrary{skins, breakable}

\usepackage{amsthm}
\newtheorem{theorem}{Theorem}[section]
\newtheorem{proposition}[theorem]{Proposition}
\newtheorem{lemma}[theorem]{Lemma}
\newtheorem{assumption}[theorem]{Assumption}

\definecolor{mygreen}{rgb}{0.27,0.70,0.23}   % DEF1D3
\definecolor{myyellow}{rgb}{0.74,0.71,0.02}  % FDF4D1

\definecolor{cadmiumgreen}{rgb}{0.0, 0.42, 0.24}

\definecolor{myred}{rgb}{0.7, 0.3, 0.0}
\definecolor{myblue}{rgb}{0.2, 0.3, 0.6}

\usepackage{amsmath,amsfonts,bm}

\def\eqref#1{equation~\ref{#1}}
\def\1{\bm{1}}

\DeclareMathAlphabet{\mathsfit}{\encodingdefault}{\sfdefault}{m}{sl}
\SetMathAlphabet{\mathsfit}{bold}{\encodingdefault}{\sfdefault}{bx}{n}

\usepackage{xcolor}

\definecolor{RAILShadow}{HTML}{B27C28}
\definecolor{RAILLive}{HTML}{467F75}

\colorlet{RAILShadowBG}{RAILShadow!10}
\colorlet{RAILLiveBG}{RAILLive!10}

\newcommand{\ShadowHeader}[1]{%
    \Statex
    \colorbox{RAILShadowBG}{%
        \parbox{\dimexpr\linewidth-2\fboxsep\relax}{%
            \textcolor{RAILShadow}{\strut\textbf{#1}}%
        }%
    }%
}

\newcommand{\LiveHeader}[1]{%
    \Statex
    \colorbox{RAILLiveBG}{%
        \parbox{\dimexpr\linewidth-2\fboxsep\relax}{%
            \textcolor{RAILLive}{\strut\textbf{#1}}%
        }%
    }%
}

\DefineVerbatimEnvironment{PromptVerbatim}{Verbatim}{
    fontsize=\footnotesize,
    breaklines=true,
    breakanywhere=true,
    commandchars=\\\{\}
}

\newtcolorbox{promptbox}[2][]{%
    enhanced,
    colback=#2!4!white,
    colframe=#2!70!black,
    coltitle=white,
    fonttitle=\bfseries,
    title={#1},
    arc=2mm,
    boxrule=0.8pt,
    left=2.5mm,
    right=2.5mm,
    top=1.5mm,
    bottom=1.5mm
}

\newtcolorbox{definitionbox}{
  colback=gray!8,
  colframe=black,
  boxrule=1pt,
  arc=5pt,
  left=6pt,
  right=6pt,
  top=3pt,
  bottom=3pt,
  before skip=5pt,
  after skip=5pt,
  breakable
}

\definecolor{promptteal}{HTML}{3F7F78}
\definecolor{promptblue}{HTML}{3F5F9F}
\definecolor{promptorange}{HTML}{B87528}
\definecolor{promptviolet}{HTML}{76569A}

\DefineVerbatimEnvironment{promptverb}{Verbatim}{
    fontsize=\footnotesize,
    baselinestretch=0.96,
    breaklines=true,
    breakanywhere=false,
    breakautoindent=false,
    breakindent=0pt,
    breaksymbolleft={},
    breaksymbolright={},
    tabsize=2
}

\newcommand{\promptsection}[1]{%
    \par\vspace{0.7mm}%
    \noindent\textbf{\footnotesize #1}\par
    \vspace{-0.6mm}%
}

\title{Optimizing What Policies Learn From: Recoverability \!\!-aware Rollout Intervention Learning}

\author{
    Zheyuan Zhang\textsuperscript{1}\textsuperscript{*}, 
    Manqing Mao\textsuperscript{2},
    Hong Wang\textsuperscript{2},
    Zhuoer Wang\textsuperscript{2},
    Samson Koelle\textsuperscript{2}, \\
    \textbf{Jie Yuan\textsuperscript{2},
    Yanjun Lin\textsuperscript{2},
    James Feng\textsuperscript{2},
    Nikki Lijing Kuang\textsuperscript{2},
    Yanfang Ye\textsuperscript{1},
    Wei Niu \textsuperscript{2}\textsuperscript{$\dagger$}} \\
    \textsuperscript{1}University of Notre Dame, 
    \textsuperscript{2}Amazon, Inc \\
    \textsuperscript{*}Work done during internship at Amazon.
    \textsuperscript{$\dagger$}Corresponding Author \\
    \texttt{zzhang42@nd.edu}, \texttt{niuwei@amazon.com}
}

\iclrfinalcopy % Uncomment for camera-ready version, but NOT for submission.
\begin{document}

\maketitle

\vspace{-15pt}
\begin{abstract}

Critic-free group-based RL has become a scalable paradigm for LLM post-training. However, its effectiveness is constrained by a central limitation: rollouts are allocated uniformly even though their learning value varies substantially across tasks and trajectory states. Despite recent efforts to treat rollout generation as an adaptive intervention, existing methods suffer from two core gaps: the \textbf{non-stationary gap}, which calls for intervention strategies from learning an evolving policy rather than relying on heuristic rules, and the \textbf{non-scalar gap}, which calls for structured control over \emph{where} and \emph{how} to intervene rather than merely deciding \emph{how many} rollouts to generate. To close these gaps, we introduce \textbf{Recoverability-Aware Intervention Learning (RAIL)}, a training-time framework that turns rollout generation into a learnable intervention process by optimizing structured decisions according to their realized \emph{recoverability gains}. Specifically, RAIL first casts intervention selection as an online contextual-bandit problem and then trains a recoverability controller from intervention traces collected through a shadow-to-live procedure, performing intervention learning while policy evolves. Finally, we evaluate RAIL for its \textbf{\emph{effectiveness}}, \textbf{\emph{adaptivity}}, \textbf{\emph{expressiveness}}, and \textbf{\emph{efficiency}}, showing consistent gains with constrained rollout budgets. Together, our results establish recoverability-aware intervention as a principled path toward more informative rollout generation, enabling post-training to optimize from stronger and less redundant learning signals.

\vspace{-10pt}
\end{abstract}

\section{Introduction}
\vspace{-5pt}

\begin{figure}[b]
	\centering
    \vspace{-10pt}
	\includegraphics[width=1\linewidth]{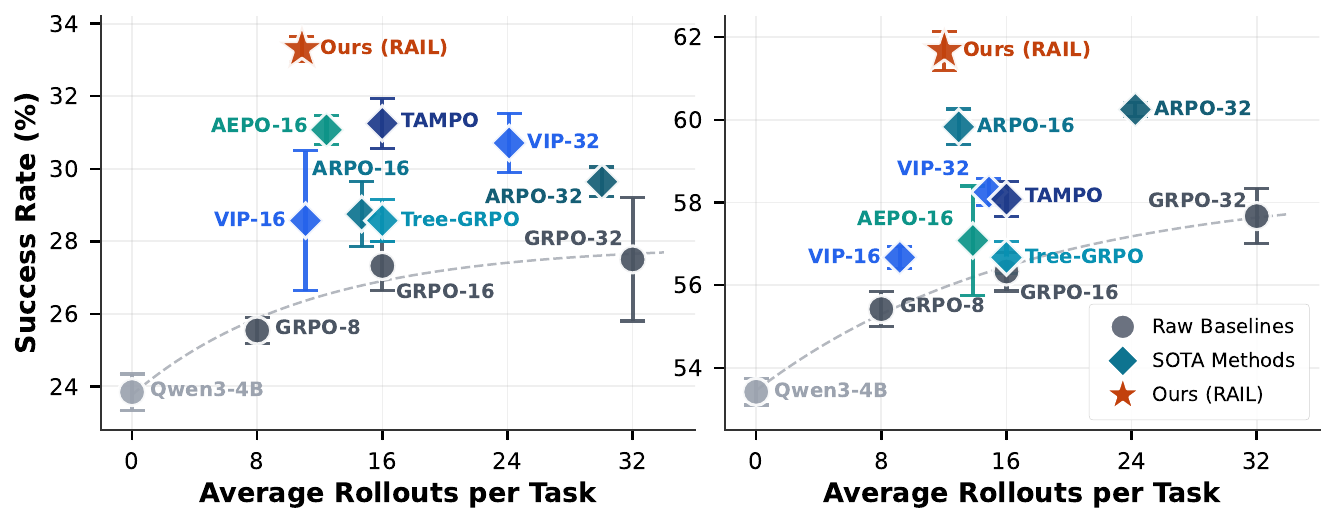}
        \vspace{-20pt}
	\caption{\textbf{Performance-rollout Results Preview on AgentBench.} RAIL achieves the highest success rates with comparable or fewer rollouts than uniform GRPO and adaptive intervention baselines on OS task (Left) and DB task(right). More results on further benchmarks and analysis in Section~\ref{sec:exp}.
    }
        
    \label{fig:opening}
\end{figure}

Despite the remarkable success of large-scale pretraining, recent LLM development increasingly relies on post-training to elicit reasoning and task-specific behaviors~\citep{tong2024dart,rafailov2023direct}. In this regime, progress depends not only on model capacity, but also on how effectively and efficiently training extracts learning signals from model-generated outputs and environment feedback. Critic-free, group-based RL methods such as GRPO~\citep{shao2024deepseekmath} improve scalability by optimizing policies from relative rewards over multiple sampled responses~\citep{liu2025understanding, ahmadian2024back}. Yet a key drawback is that they typically allocate rollout budgets uniformly, even though the available learning signal varies substantially across tasks and trajectory states in realistic settings~\citep{dong2025agentic}. This mismatch is amplified in agentic settings, where rewards are sparse, environment interactions are costly, and early decisions can make later recovery difficult~\citep{dong2026agentic}. As a result, the marginal value of additional rollouts becomes highly state-dependent: Some states benefit from additional exploration because it reveals reward contrast or alternative successful trajectories~\citep{zheng2025first, wei2026entropy}, whereas saturated or unrecoverable states only consume rollout budget, weakening optimization by mixing informative and uninformative trajectories in the same update~\citep{wang2026ragen}.

To tackle this problem, prior work has made rollout allocation adaptive from task-level allocation~\citep{yang2025depth, nguyen2026adaptive} to step-level branching~\citep{zheng2025first, dong2025agentic}, using signals such as reward variance~\citep{xiong2025reinforce, fang2026allocate} or internal signals such as entropy and uncertainty~\citep{wei2026entropy, duan2025uprop}.
% \textcolor{red}{We define this intervention utility as recoverability: a trajectory state is recoverable if targeted branching or modified exploration can make its rollout group more informative for policy optimization than default continuation. Recoverability concerns the intervention-induced improvement in the
% training signal, rather than the immediate expected return of the
% intervened rollout group.} 
However, they generally suffer from \textbf{Two Core Gaps: 1) The non-stationary gap:} existing intervention policies are largely static heuristics, which are only indirect proxies for whether the rollout intervention can expose useful learning signals, and their intervention gain can shift as policies evolve. \textbf{2) The non-scalar gap:} prior methods reduce intervention to a scalar budget decision, such as \emph{how many} rollouts to generate, which provides no principled way to coordinate coupled intervention choices, such as \emph{where} or \emph{how} to branch. Thus, a question naturally arises: \textbf{\emph{Can rollout intervention be learned online from prior realized interventions, while coordinating structured decisions about where and how to intervene?}}

To answer this question, we introduce \textbf{Recoverability-Aware Intervention Learning (RAIL)}, a training-time framework that learns structured intervention decisions from observed \emph{recoverability gains}, where \textbf{recoverability} measures how much an intervention improves the reward-distribution signal at a task or trajectory state
% {\color{red}
% where \textbf{recoverability} measures the expected improvement in the finite rollout-group reward-contrast signal induced by an intervention
% }
(Formal definition in Section~\ref{sec:prelim}). Instead of relying on fixed uncertainty or difficulty proxies, RAIL continually updates a recoverability controller from intervention outcomes and uses it to decide where to branch and select intervention strategies over a structured space. \textbf{Specifically,} RAIL first formulates recoverability learning as an online contextual-bandit problem, where the context is the current trajectory state, the action is a structured intervention, and the observed utility is the realized recoverability gain. It then learns a recoverability controller from intervention traces and deploys it through a structured shadow-to-live procedure: a short shadow phase collects supervision, while a utility-gated live phase selectively applies the controller during policy optimization. This turns rollout generation from a fixed procedure into an adaptive, outcome-driven training loop. Across various benchmarks, RAIL improves \textbf{\emph{effectiveness, adaptivity, expressiveness, and efficiency}} over SOTA intervention baselines. These results show that learning can substantially benefit structured rollout intervention, thereby strengthening policy optimization, pointing toward a broader paradigm for post-training in which both model parameters and the rollout-generation process are adapted to produce higher-value learning signals. Overall, our contributions are:
\begin{itemize}
    \item \textbf{To bridge the non-stationary gap,} we introduce \emph{recoverability} as an outcome-based principle for rollout intervention and propose an online recoverability controller that learns intervention utility from realized reward-distribution gains, rather than relying on fixed uncertainty or difficulty heuristics.

    \item \textbf{To bridge the non-scalar gap,} we formulate rollout intervention over a structured decision space and develop a shadow-to-live deployment procedure that safely collects intervention traces before applying utility-gated online branching during policy optimization.

    \item We evaluate RAIL along four complementary axes: \textbf{effectiveness, adaptivity, expressiveness, and efficiency.} Across various benchmarks, RAIL improves over SOTA heuristic intervention baselines, highlighting a more informative and principled direction for rollout intervention that provides stronger optimization signals.
\end{itemize}

\section{Rethinking Rollout Intervention through Recoverability}
\label{sec:rethink}

\subsection{Preliminaries}
\label{sec:prelim}

\textbf{GRPO. } 
Group Relative Policy Optimization (GRPO) is a critic-free policy gradient method for reasoning-oriented RL \cite{shao2024deepseekmath}. For each input $x$, it samples a group of responses $Y=\{y_i\}_{i=1}^G$ and estimates the gradient using normalized rewards:

\vspace{-25pt}
\begin{equation}
\hat{g}(x) = \frac{1}{G} \sum_{i=1}^{G} \hat{A}_i \, \nabla_\theta \log \pi_\theta(y_i \mid x), \quad
\hat{A}_i = \frac{R(y_i) - \mu_R}{\sigma_R + \epsilon}.
\end{equation}

\textbf{Uniform Sampling Limitation.} While GRPO allocates a fixed rollout budget to each task and computes advantages from uniformly sampled trajectories, the underlying optimization signals are inherently non-uniform: \emph{Some decisions carry richer signals, whereas others contribute little useful information.} This mismatch can waste training compute and dilute policy-gradient updates, motivating \emph{rollout intervention} as an adaptive mechanism for deciding where and how additional rollouts should be generated across tasks and trajectory states.

\textbf{Rollout Intervention.}
We define \emph{rollout intervention} as a controlled modification to the default rollout generation process. Given an input $x$, rather than sampling a fixed group of rollouts from the root-level, it introduces a choice $b\in\mathcal{B}$, where $\mathcal{B}$ denotes a structured intervention space, according to heuristics or learned controllers that determines how the rollout set $Y_b$ is produced. Formally, let $z$ denote the initial or intermediate state of $x$. We write $Y_b \sim Q_{\theta,b}(\cdot \mid z)$. An intervention may adjust the rollout budget, branch from selected intermediate states, or alter the exploration behavior used to generate continuations under the same policy $\pi_\theta$. In this view, rollout generation becomes an optimization object in its own right: \emph{instead of passively collecting samples, training can actively shape the rollout distribution toward trajectories that provide stronger learning signal.}

\textbf{Recoverability.}
Rollout intervention is beneficial only when additional exploration can expose richer learning signals. For tasks that are already resolved, or states whose future outcomes are overly constrained, further rollouts solely add cost while diluting the optimization signal. In this paper, we introduce \textbf{\emph{recoverability}}, defined as the intervention gain that remains available at a task or trajectory state. Formally, let $z$ denote a rollout state and $b\in\mathcal{B}$ an intervention choice. We define:
\begin{equation}
    \Delta_\theta(z,b)
    =
    \mathbb{E}\!\left[I(Y_b)-I(Y_{\varnothing}) \mid z,b,\pi_\theta\right],
\end{equation}
where $Y_b$ and $Y_{\varnothing}$ denote the finite rollout groups generated after applying intervention $b$ and under the default continuation, respectively. 
$I(Y)=\frac{1}{|Y|}\sum_{y_i\in Y}(R(y_i)-\bar{R}_Y)^2$ measures the reward contrast of a rollout group. A state is recoverable if there exists an intervention that yields positive expected gain, i.e., $\max_{b\in\mathcal{B}} \Delta_\theta(z,b) > 0$. \textbf{We instantiate the recoverability signal by finite-group reward contrast}, since in group-based policy optimization with verifiable rewards, reward variance is a principal source of non-degenerate relative-advantage signal~\citep{nguyen2026adaptive}; We provide detailed justification of why it can faithfully capture recoverability in Appendix~\ref{app:variance_signal}.
% {\color{red}
% a criterion motivated by \cite{nguyen2026adaptive} for assessing whether a rollout group has non-degenerate reward variation needed for useful relative advantages.
% }
%following the observation that in group-based policy optimization with verifiable rewards, non-degenerate reward contrast appears as a central reward-dependent factor in the relative-advantage and projected-gradient variance signals; we provide the detailed justification in Appendix~\ref{app:variance_signal}, building on the variance analysis of~\citet{nguyen2026adaptive}.

Intuitively, recoverability is high when intervention can reveal outcome variation missed by the default rollout process: for example, by expanding under-explored viable continuations, or by branching before early errors accumulate into effectively irreversible failures, analogous to absorbing states in Markov Decision Processes (MDP) \cite{sutton1998reinforcement}. This concern is amplified in agentic RL, where rewards are sparse, rollouts are costly, and environment interactions can be irreversible.

\textbf{Task Formulation.}
We frame rollout intervention as a recoverability-aware decision problem: at each rollout candidate state, choose the intervention with the largest expected recoverability gain:
% \begin{equation}
%    b^{*}(z) \in \arg\max_{b\in\mathcal{B}} \mathbb{E}[\Delta(z,b)\mid z].
% \end{equation}

\vspace{-10pt}
\begin{equation}
    b_{\theta}^{*}(z)
    \in
    \operatorname*{arg\,max}_{b \in \mathcal{B}}
    \Delta_{\theta}(z,b).
\end{equation}
\vspace{-10pt}

Since this gain is not observed before intervention, existing methods approximate it with fixed heuristics such as uncertainty and difficulty. This exposes \textbf{two core gaps}: recoverability is difficult to capture with static heuristics as policy optimization evolves, and effective intervention cannot be reduced to a fixed scalar strategy. We elaborate these two core gaps in the next subsection.

\subsection{Challenges in Recoverability-Aware Intervention}
\label{sec:challenges}

\textbf{The Non-stationary Gap.}
As discussed above, existing methods allocate additional rollouts using heuristic signals, yet these signals are merely indirect proxies for the intervention gain $\Delta_\theta(z,b)$: a state that appears uncertain may already be unrecoverable, or appears difficult and requires a particular intervention before useful reward contrast can emerge. Moreover, because $\Delta_\theta(z,b)$ depends on the current policy $\pi_\theta$, its relation to heuristic signals shifts during training. Thus, a fixed heuristic proxy will fail not only because it rarely represents recoverability accurately, but also because the target itself evolves during training. This mismatch can be formulated as below.

\begin{comment}
\textbf{Observation 1 (Non-stationary recoverability gap).}
\emph{For a policy $\pi_{\theta_t}$ at step $t$, the recoverability oracle $\mathcal{O}_{\theta_t}(z_t)$ indicates the best attainable intervention gain at $z_t$. For a given intervention strategy $f$, its cumulative recoverability gap $\mathcal{G}_T(f)$ over training can be represented as
\begin{equation}
    \mathcal{G}_T(f)
    =
    \sum_{t=1}^{T}
    \mathbb{E}_{z_t}
    \left[
        \mathcal{O}_{\theta_t}(z_t)
        -
        \mathbb{E}\!\left[\Delta_{\theta_t}(z_t,f(z_t))\mid z_t\right]
    \right].
\end{equation}
A strategy with persistent mismatch from $\mathcal{O}_{\theta_t}$ on a linear number of steps yields $\mathcal{G}_T(f)=\Omega(T)$; existing heuristic intervention methods are prone to this case because their fixed proxy-to-intervention rules cannot correct mismatch from realized outcomes. Effective intervention should instead correct such mismatch and achieve sublinear growth.}
\end{comment}

\begin{definitionbox}
\noindent\textbf{Observation 1 (Non-stationary recoverability gap).}
For a policy $\pi_{\theta_t}$ at step $t$, define the recoverability oracle as $\mathcal{O}_{\theta_t}(z):= \max_{b \in \mathcal{B}} \Delta_{\theta_t}(z,b)$. For a given intervention strategy $f$, its cumulative recoverability gap $G_T(f)$ over training can be represented as: 

\vspace{-8pt}
\begin{equation}
G_T(f) = \sum_{t=1}^{T} \mathbb{E}_{z_t} \left[ \mathcal{O}_{\theta_t}(z_t)
- \Delta_{\theta_t}\!\left(z_t,f(z_t)\right) \right]. 
\end{equation}
\vspace{-8pt}

A strategy with persistent mismatch from $\mathcal{O}_{\theta_t}$ on a
linear number of steps yields $G_T(f)=\Omega(T)$; existing heuristic intervention methods are prone to this case because their fixed proxy-to-intervention rules cannot correct mismatch from realized outcomes. Effective intervention should instead correct such mismatch and achieve sublinear growth.
\end{definitionbox}

To bridge this gap, we therefore ask the following two questions. \textbf{RQ1 (Effectiveness):} \emph{Does \textbf{RAIL} improve final task success over uniform GRPO and heuristic rollout-intervention baselines?} More specifically, \textbf{RQ2 (Adaptivity):} \emph{Can \textbf{RAIL} track changing recoverability better than static heuristic rules? if so, can we show it both empirically and theoretically?}

\textbf{The Non-scalar Gap.}
Another challenge is that effective intervention cannot be fully described by a scalar rollout budget. Once a recoverable state is identified, the intervention must still determine how to reshape the rollout distribution, such as how much budget to spend and how to explore. These choices are coupled, yet prior works either focus mainly on rollout count or treat them largely separate decisions. Thus, the deeper limitation is not only the omission of additional intervention dimensions, but the lack of a principled mechanism for coordinating them. Consequently, such methods can miss recoverability gains that come not from sampling more, but from sampling differently.

\begin{definitionbox}
\textbf{Observation 2 (Non-scalar expressiveness gap).}
Let $\mathcal{B}_{\mathrm{scalar}}\subset\mathcal{B}$ denote a restricted intervention space that varies only one scalar dimension, such as rollout count, while fixing other intervention dimensions;  Let $\mathcal{O}_{\mathcal{B}}(z)$ and $\mathcal{O}_{\mathcal{B}_{\mathrm{scalar}}}(z)$ denote the best attainable recoverability gain under the structured and scalar intervention spaces, which induces an expressiveness gap:
\begin{equation}
    \mathcal{E}_{\mathrm{scalar}}(z)
    =
    \mathcal{O}_{\mathcal{B}}(z)
    -
    \mathcal{O}_{\mathcal{B}_{\mathrm{scalar}}}(z).
\end{equation}
When $\mathcal{E}_{\mathrm{scalar}}(z)>0$, improving $\mathcal{O}_{\mathcal{B}_{\mathrm{scalar}}}(z)$ alone cannot recover the gain from the structured intervention space. Thus, beyond a single scalar intervention dimension, an effective intervention strategy requires an expressive space that captures coupled rollout intervention decisions.
\end{definitionbox}

This gap motivates \textbf{RQ3 (Expressiveness):} \emph{Does modeling intervention as a structured decision space enable \textbf{RAIL} to recover useful training signals, and how do intervention dimensions contribute differently to different tasks?} Finally, another important benefit of rollout intervention is its potential to improve training efficiency by while avoiding saturated or unrecoverable ones. This leads to \textbf{RQ4 (Efficiency):} \emph{Can \textbf{RAIL} use fewer rollouts to achieve efficient policy improvement?}

\vspace{-5pt}
\section{RAIL: Recoverability-Aware Intervention Learning}
\label{sec:method}
\vspace{-5pt}

To address the above challenges, we propose \textbf{Recoverability-Aware Intervention Learning (RAIL)}, a training-time framework that enables policy optimization to receive richer rollout signals while improving efficiency by allocating rollout compute according to learned recoverability. As shown in Figure~\ref{fig:rail_overview}, RAIL couples structured rollout intervention with an online recoverability controller, allowing rollout intervention to adapt with the evolving policy rather than relying on fixed uncertainty or difficulty heuristics. Specifically, the method has two components. Section~\ref{sec:method-learning} (Figure~\ref{fig:rail_overview}.a) formulates recoverability learning as a contextual-bandit problem, trains the controller from intervention traces; Section~\ref{sec:method-deploy} (Figure~\ref{fig:rail_overview}.b) introduces the structured intervention interface and shadow-to-live deployment, enabling trace collection followed by utility-gated intervention during policy optimization.

\begin{figure}[t]
	\centering
    \vspace{-15pt}
	\includegraphics[width=1\linewidth]{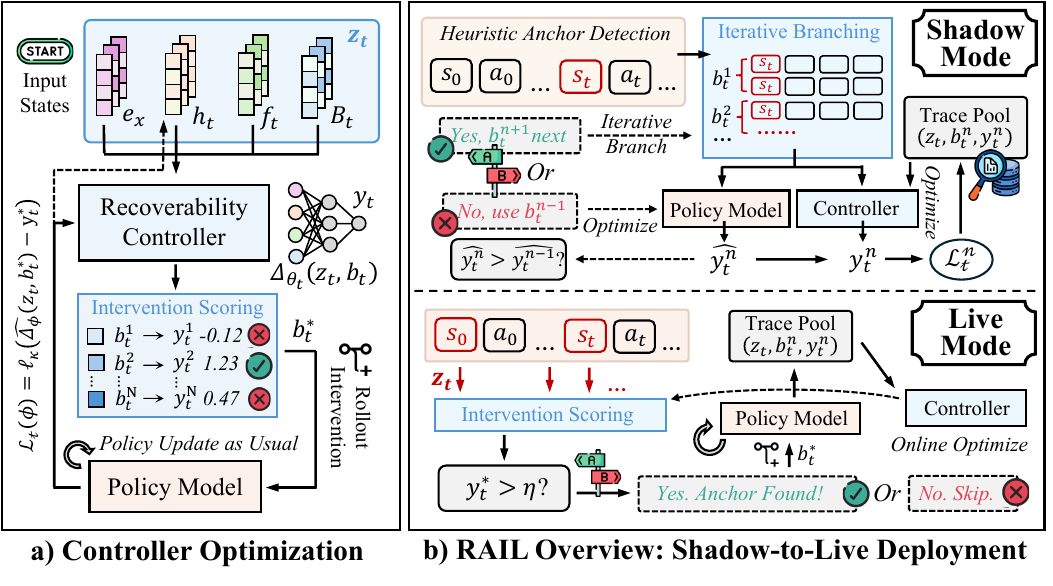}
        \vspace{-20pt}
	\caption{\textbf{Overall Framework of RAIL.}
(a) The recoverability controller predicts intervention gains from state-action traces. (b) RAIL uses a shadow-to-live procedure: shadow mode collects supervision through heuristic iterative branching, while live mode deploys the learned controller for utility-gated rollout intervention and online adaptation as the policy evolves.
    }
    \label{fig:rail_overview}
    \vspace{-15pt}
\end{figure}

\subsection{Learning Recoverability from Intervention Outcomes}
\label{sec:method-learning}

\begin{comment}
\textbf{Recoverability Controller.}
The central challenge in recoverability-aware intervention is that recoverability gain is only observed after acting. Formally, under the current policy $\pi_{\theta_t}$, applying intervention $b_t$ at rollout state $z_t$ yields the post-hoc signal $y_t=\Delta_{\theta_t}(z_t,b_t)$ only after the intervened rollouts are generated. RAIL addresses this observability gap by maintaining a learned \textbf{\emph{recoverability controller}}, trained from intervention traces $(z_t,b_t,y_t)$ rather than fixed proxies such as uncertainty or difficulty. These traces induce a contextual-bandit interaction: at each decision point, the controller observes $z_t$, chooses one intervention $b_t\in\mathcal{B}$, and observes only the utility $y_t$ of the selected intervention. Since the gains of unchosen interventions remain counterfactual, the controller must infer intervention utilities from partial feedback accumulated over training. To make this partial-feedback problem learnable, the controller is parameterized by a score function $\widehat{\Delta}_{\phi_t}(z,b)$, where larger scores indicate larger predicted recoverability gain, and trained by fitting its predicted scores to realized gains:
\end{comment}

\textbf{Recoverability Controller.} The central challenge in recoverability-aware intervention is that applying intervention $b_t$ at state $z_t$ yields only an empirical gain $y_t$, which is only observed after acting. Therefore, RAIL addresses this observability gap by maintaining a learned
\textbf{\emph{recoverability controller}}, trained from intervention traces
$(z_t,b_t,y_t)$.
Formally, under the current policy $\pi_{\theta_t}$, let $\Delta_{\theta_t}(z,b):=\mathbb{E}[y_t \mid z,b,\pi_{\theta_t}]$ denote the expected gain of applying intervention $b$ at state $z$. After applying $b_t$ at $z_t$, RAIL observes finite-sample outcome $y_t=\Delta_{\theta_t}(z_t,b_t)+\xi_t, (\mathbb{E}[\xi_t\mid z_t,b_t,\pi_{\theta_t}]=0),$
once the intervened traces have been generated. These traces induce a contextual-bandit interaction: at each decision point, the controller observes $z_t$, chooses one intervention $b_t\in\mathcal{B}$, and observes only the outcome $y_t$ of the selected intervention. 
% Here, \(z_t\) is the intervention context rather than only the trajectory prefix; it includes the candidate state, environment observation, current rollout-group statistics, uncertainty features, and trajectory position.
Since the gains of unchosen interventions remain counterfactual,
the controller must infer intervention utilities from partial feedback
accumulated over training. To make this partial-feedback problem learnable,
the controller is parameterized by a score function
$\widehat{\Delta}_{\phi_t}(z,b)$, where larger scores indicate larger predicted
recoverability gain, and trained by fitting its predicted scores to the observed gains:

\vspace{-10pt}
\begin{equation}
\label{eq:rail_predictor_obj}
    \mathcal{L}_t(\phi)
    =
    \sum_{j\in\mathcal{I}_t}
    w_{t-j}\,
    \ell_{\kappa}
    \left(
        \widehat{\Delta}_{\phi}(z_j,b_j)-y_j
    \right),
    \quad
    \mathcal{I}_t=\{j:\max(1,t-W+1)\le j\le t\}.
\end{equation}
\vspace{-10pt}

Here, $\mathcal{I}_t$ denotes the recent trace window of size $W$, and $w_{t-j}$ is a non-increasing recency weight that assigns larger weight to more recent traces. The Huber loss $\ell_{\kappa}$ handles noisy empirical gains from finite rollout groups, while the windowed recency weighting helps the controller track the changing recoverability function as $\pi_{\theta_t}$ evolves. We defer the details of the implementation, including contents of state $z_t$, intervention spaces $\mathcal{B}$, exploration schedules and other details to Appendix~\ref{app:rail_details}.

\textbf{Recoverability Regret.}
Having specified how the controller is learned, we next ask if it can avoid the linear growth of the recoverability gap incurred by fixed heuristic strategies. Since $\mathcal{B}$ parameterizes executable interventions, we focus on regret analysis over rounds in which the utility gate triggers intervention. We define the cumulative \emph{recoverability regret} as

\vspace{-15pt}
\begin{equation}
\label{eq:recoverability_regret}
    \mathrm{Reg}_T
    =
    \sum_{t=1}^{T}
    \mathbb{E}_{z_t}
    \left[
        \max_{b\in\mathcal{B}}
        \Delta_{\theta_t}(z_t,b)
        -
        \Delta_{\theta_t}(z_t,b_t)
    \right].
\end{equation}
This is the learned-controller version of the recoverability gap in Observation~1: it compares the controller's selected intervention with the best executable intervention in $\mathcal{B}$ at each intervened round. We therefore aim to show that this regret can grow sublinearly under standard conditions.

% \textbf{Recoverability Regret.}
% Having specified how the controller is learned, we next ask if it can avoid the linear growth of the recoverability gap incurred by fixed heuristic strategies. 
% %Since the controller is learned through contextual-bandit feedback, we define its cumulative \emph{recoverability regret} as 
% {\color{blue}
% Throughout this analysis, $t$ indexes rounds in which an intervention is
% executed, and $b_t \in \mathcal{B}$ denotes the selected intervention. The decision of whether to intervene at all is outside the scope of the regret analysis.
% We define the cumulative recoverability regret as
% }

% \begin{equation}
% \label{eq:recoverability_regret}
%     \mathrm{Reg}_T
%     =
%     \sum_{t=1}^{T}
%     \mathbb{E}_{z_t}
%     \left[
%         \max_{b\in\mathcal{B}}
%         \Delta_{\theta_t}(z_t,b)
%         -
%         \Delta_{\theta_t}(z_t,b_t)
%     \right].
% \end{equation}
% This is the learned-controller version of the recoverability gap in Proposition~1: it compares the controller's selected intervention with the best intervention in $\mathcal{B}$ at %each training step.
% {\color{blue}
% at each executed-intervention round.}
% We therefore aim to show that this regret can grow sublinearly under standard conditions.

\begin{theorem}[Recoverability regret under tracking bound condition]
\label{thm:rail_regret}
Given a finite intervention space $\mathcal{B}$ and bounded recoverability gains, let $D_T$ denote the cumulative policy drift variation of $\Delta_{\theta_t}$, and $A_T$ denote the cumulative approximation error of the controller class. Under finite-action contextual-bandit feedback with sufficient intervention exploration, the regret of the controller satisfies

\vspace{-15pt}
\begin{equation}
    \mathrm{Reg}_T
    \le
    \tilde{O}
    \left(
        \sqrt{T|\mathcal{B}|}+D_T+A_T
    \right).
\end{equation}
\vspace{-15pt}

Consequently, when $D_T+A_T=o(T)$, the controller achieves sublinear recoverability regret.
\end{theorem}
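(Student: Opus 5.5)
Our approach is to reduce the statement to a standard regret bound for finite-action contextual bandits with a regression oracle (in the style of SquareCB / inverse-gap weighting), and then account separately for the non-stationarity of $\Delta_{\theta_t}$ and for the misspecification of the controller class. We fix a reference controller class $\Phi$ and, at each intervened round $t$, let $\phi_t^\star\in\arg\min_{\phi\in\Phi}\mathbb{E}_{z}\max_b|\widehat{\Delta}_\phi(z,b)-\Delta_{\theta_t}(z,b)|$. We define $A_T=\sum_t \mathbb{E}_{z_t}\max_b|\widehat{\Delta}_{\phi_t^\star}(z_t,b)-\Delta_{\theta_t}(z_t,b)|$ and $D_T=\sum_{t}\sup_{z,b}|\Delta_{\theta_{t+1}}(z,b)-\Delta_{\theta_t}(z,b)|$ (scaled by the window length where needed). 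The sufficient-exploration assumption will be read as saying that $b_t$ is drawn from an exploration distribution $p_t(\cdot\mid z_t)$ of inverse-gap-weighting form, with parameter $\gamma$ and a uniform floor.

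The first step is the per-round decomposition. We write $\max_b\Delta_{\theta_t}(z_t,b)-\Delta_{\theta_t}(z_t,b_t)$ and replace $\Delta_{\theta_t}$ by the prediction $\widehat{\Delta}_{\phi_t}$ on both terms. The standard IGW lemma (Foster--Rakhlin) then gives, conditionally on $z_t$,
\[
\mathbb{E}_{b\sim p_t}\!\left[\max_{b'}\Delta_{\theta_t}(z_t,b')-\Delta_{\theta_t}(z_t,b)\right]\le \frac{2|\mathcal{B}|}{\gamma}+\gamma\,\mathbb{E}_{b\sim p_t}\!\left[(\widehat{\Delta}_{\phi_t}(z_t,b)-\Delta_{\theta_t}(z_t,b))^2\right].
\]
Boundedness of the gains makes the Huber loss in \eqref{eq:rail_predictor_obj} equivalent to squared loss up to constants on the relevant range, with $\kappa$ chosen above the gain bound. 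Summing over rounds, the square-loss term is controlled by an online regression bound: the cumulative excess square loss of $\widehat{\Delta}_{\phi_t}$ against the best fixed comparator is $\mathrm{Reg}_{\mathrm{sq}}(T)=\tilde O(1)$ for a finite or parametric class. The targets $y_j$ are noisy with mean $\Delta_{\theta_j}$, so by the martingale structure of $\xi_t$ the noise contributes only a logarithmic term via Freedman's inequality.

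The main obstacle is that the regression target moves. The windowed, recency-weighted objective fits $\{y_j\}_{j\in\mathcal{I}_t}$, whose means $\Delta_{\theta_j}$ differ from $\Delta_{\theta_t}$. We would handle this by bounding the bias $|\Delta_{\theta_j}-\Delta_{\theta_t}|\le\sum_{s=j}^{t-1}\sup|\Delta_{\theta_{s+1}}-\Delta_{\theta_s}|$ for $j\in\mathcal{I}_t$. Each drift increment is counted at most $W$ times, so the total contribution of drift is $O(W\cdot D_T')$, where $D_T'$ is the raw variation. We absorb the factor $W$ into the definition of $D_T$ (equivalently, assume $W$ is polylogarithmic). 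Misspecification enters the same way, as an additive $A_T$ term after passing from squared to absolute error via boundedness. One caveat is that the squared-loss route yields drift and approximation terms multiplied by $\gamma$ or under a square root. To obtain the additive form $D_T+A_T$ claimed in Theorem~\ref{thm:rail_regret}, we would instead compare directly: $\Delta_{\theta_t}(z,b^\star)-\Delta_{\theta_t}(z,b_t)\le[\widehat{\Delta}_{\phi_t}(z,b^\star)-\widehat{\Delta}_{\phi_t}(z,b_t)]+2\max_b|\widehat{\Delta}_{\phi_t}-\Delta_{\theta_t}|$. We then split the last error into its estimation part (handled by the IGW and regression argument), its approximation part ($\le A_T$ in sum) and its drift part ($\le D_T$ in sum).

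Finally, we take $\gamma=\sqrt{T|\mathcal{B}|/\mathrm{Reg}_{\mathrm{sq}}(T)}$ to balance $T|\mathcal{B}|/\gamma$ against $\gamma\,\mathrm{Reg}_{\mathrm{sq}}(T)$, which yields $\tilde O(\sqrt{T|\mathcal{B}|})$. Adding the drift and approximation terms gives $\mathrm{Reg}_T\le\tilde O(\sqrt{T|\mathcal{B}|}+D_T+A_T)$. The sublinearity corollary follows immediately, since $\sqrt{T|\mathcal{B}|}=o(T)$ for fixed finite $\mathcal{B}$.
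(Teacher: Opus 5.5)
Your proposal has a genuine gap in the last step, where you drop the inverse-gap-weighting (IGW) inequality for the direct comparison so that $D_T$ and $A_T$ enter additively. That comparison bounds the regret by two terms. The first is $\sum_t\mathbb{E}[\widehat{\Delta}_{\phi_t}(z_t,b^\star)-\widehat{\Delta}_{\phi_t}(z_t,b_t)]$, which is at most $T|\mathcal{B}|/\gamma$ under IGW, so it is fine. The second is $2\sum_t\mathbb{E}_{z_t}\max_b|\widehat{\Delta}_{\phi_t}(z_t,b)-\Delta_{\theta_t}(z_t,b)|$, whose estimation part you say is ``handled by the IGW and regression argument.'' It is not.

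Under bandit feedback, the online square-loss oracle only controls $\sum_t\mathbb{E}_{b\sim p_t}(\widehat{\Delta}_{\phi_t}-\Delta)^2$ at the \emph{sampled} action. Converting this to a uniform-over-actions error costs a factor $1/\min_b p_t(b\mid z_t)\le|\mathcal{B}|+\gamma$. Cauchy--Schwarz then gives only $\sum_t\mathbb{E}\max_b|\cdot|\lesssim\sqrt{T(|\mathcal{B}|+\gamma)\,\mathrm{Reg}_{\mathrm{sq}}(T)}$. At your choice of $\gamma$ this is of order $T^{3/4}$. Re-balancing $\gamma$ against the $T|\mathcal{B}|/\gamma$ term gives at best order $T^{2/3}|\mathcal{B}|^{1/3}$ (up to $\mathrm{Reg}_{\mathrm{sq}}$ factors), not $\sqrt{T|\mathcal{B}|}$. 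SquareCB reaches the $\sqrt{T}$ rate precisely because the IGW lemma never passes through a $\max_b$ error.

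So your two routes cannot be spliced together. The IGW route gives $\sqrt{T|\mathcal{B}|}$, but per-round drift and misspecification errors $e_t$ enter as $\gamma\sum_t e_t^2$, i.e.\ as $\sqrt{T|\mathcal{B}|\sum_t e_t^2}$ after tuning. By Cauchy--Schwarz this is at least $\sqrt{|\mathcal{B}|}\sum_t e_t$, and it can be polynomially larger when the $e_t$ concentrate on few rounds. The direct route gives additive $D_T+A_T$ but a worse statistical term. A smaller issue: $\mathrm{Reg}_{\mathrm{sq}}(T)=\tilde O(1)$ is a bound against a fixed comparator, while your $\phi_t^\star$ moves with $t$, so you would need a dynamic-regret guarantee for the windowed regression.

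The paper avoids all of this. Its appendix version of the theorem takes the cumulative uniform tracking bound $\epsilon_T=\sum_t\mathbb{E}_{z_t}\max_b|\widehat f_t(z_t,b)-f_t(z_t,b)|\le\tilde O(\sqrt{T|\mathcal{B}|})+O(D_T)+O(A_T)$ as an explicit assumption; this is the ``tracking bound condition'' in the title. It pairs this with greedy-with-exploration sampling satisfying $\sum_t\rho_t=\tilde O(\sqrt{T|\mathcal{B}|})$. The proof is then exactly your direct comparison on greedy rounds, where $\widehat f_t(z_t,b_t^\star)-\widehat f_t(z_t,b_t)\le 0$ because $b_t$ maximizes the estimate. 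Each exploration round adds at most $\rho_t$, giving $\mathrm{Reg}_T\le 2\epsilon_T+\sum_t\rho_t$.

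To repair your argument you have two options. You can adopt that tracking condition as a hypothesis, at which point the IGW and online-regression machinery becomes unnecessary. Otherwise you must accept a weaker statistical rate or non-additive drift and approximation terms.
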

\vspace{-5pt}

The terms $D_T$ and $A_T$ generally align with the same stability and learnability conditions under which policy optimization is expected to make progress. Under these conditions, the controller achieves sublinear regret, so its average recoverability regret decreases asymptotically. A fixed heuristic, however, cannot use outcome feedback to correct proxy mismatch. Thus, even with those assumptions, a persistent per-step mismatch accumulates into linear recoverability regret, which can bias policy optimization toward less informative rollouts, ultimately harming policy optimization. We provide the full assumptions and proof in Appendix~\ref{app:regret}.

\subsection{Structured Intervention and Shadow-to-Live Deployment}
\label{sec:method-deploy}

\textbf{Overview.}
Although the controller scores interventions by $\widehat{\Delta}_{\phi_t}(z,b)$, these scores are not reliable until the controller has observed enough intervention outcomes. RAIL therefore uses a \textbf{shadow-to-live procedure}: a warm-up policy executes structured interventions and records realized gains, allowing the controller to learn from $(z_t,b_t,y_t)$ before it controls live rollout allocation. Once deployed, the controller selects interventions directly and continues updating from new traces as the policy evolves.

\textbf{Shadow Phase and Structured Trace Collection.}
In the shadow phase, RAIL uses a lightweight heuristic warm-up policy to collect intervention traces before live deployment, sampling states where recoverability is plausible and exposing recoverability signals for the controller to learn. Given an initial rollout group, the warm-up policy identifies candidate branch points using high-percentile decision entropy, since such anchors mark states where the policy has not collapsed to a single continuation mode and additional branching is more likely to reveal useful reward contrast. At each selected anchor, the warm-up policy considers a finite executable intervention space
\begin{equation}
\label{eq:rail_intervention_space}
    \mathcal{B}
    =
    \mathcal{M}
    \times
    \mathcal{T},
\end{equation}
where $\mathcal{M}$ is the combination of a discrete set of branch budgets and $\mathcal{T}$ is a discrete set of decoding regimes (e.g. sampling temperatures). This is a minimal instantiation of RAIL's structured intervention interface: the branch budget controls how many additional continuations are generated, while the decoding regimes controls how exploratory those continuations are.

Specifically, at each selected anchor, the warm-up policy applies candidate branches in increasing budget order. It starts with a small branch budget; if the added continuations increase reward variance, the branch is retained and the next larger budget is tried. Otherwise, or once the maximum budget is reached, branching at that anchor stops. This iterative accept-or-stop design \textbf{provides two advantages. First,} it searches for a near-best useful branch budget under the current policy, rather than committing to a fixed amount of extra rollout compute. \textbf{Second,} each accepted or rejected trial becomes a separate intervention outcome, giving the controller multiple within-task training signals rather than a single label from the final rollout group. Alongside budget selection, the warm-up policy also chooses the decoding regime for each trial based on task difficulty and current rollout outcomes. Harder or unresolved tasks are assigned more exploratory decoding regimes, since greater continuation diversity is more likely to uncover alternative successful trajectories. We include the specific branch budgets, decoding regimes and the stopping criteria of shadow phase in Appendix~\ref{app:rail_details}.

For each attempted intervention $b_t=(m_t,\tau_t)$ at state $z_t$, the controller predicts its utility $\widehat{\Delta}_{\phi_t}(z_t,b_t)$ before the branch outcome is observed. After the branch is evaluated, RAIL computes the realized recoverability gain $y_t$ and stores the trace $(z_t,b_t,y_t)$ in the training buffer. The controller is then updated by fitting \eqref{eq:rail_predictor_obj} on this buffer, where older traces receive smaller weights. Thus, the shadow phase provides the initial supervised data to train the controller over the same structured intervention space used in live phase, while keeping early rollout allocation under the warm-up policy.

% \textbf{Shadow-to-Live Transition.}
% RAIL switches from shadow to live deployment using the controller's \emph{prequential} decision quality rather than a fixed hand-set step. For each shadow intervention, the prediction $\widehat{\Delta}_{\phi_t}(z_t,b_t)$ is recorded before observing $y_t$ and before updating the controller, so its accuracy provides an out-of-sample signal of whether recoverability has become learnable. We track rolling sign agreement over the most recent $W$ traces:
% \begin{equation}
% \label{eq:rail_preq_gate}
%     g_t
%     =
%     \frac{1}{W}\sum_{i=t-W+1}^{t}
%     \mathbf{1}\!\left[
%     \operatorname{sign}\widehat{\Delta}_{\phi_i}(z_i,b_i)
%     =
%     \operatorname{sign} y_i
%     \right].
% \end{equation}
% RAIL enters live deployment once $g_t$ stays above the chance-referenced threshold
% $\tfrac{1}{2}+c\sqrt{1/(4W)}$ for $W$ consecutive traces, provided that every intervention in $\mathcal{B}$ has been observed at least once and a minimum warm-up length has elapsed; otherwise, a budget cap forces the transition. We use sign agreement rather than regression error because the live rule depends on choosing interventions with positive utility, while realized gains are noisy and often near zero. Thus, the gate tests whether the controller is reliably better than chance at identifying useful interventions, without requiring convergence to a fixed target. We report $W$, $c$, the warm-up length, and the budget cap in Appendix~\ref{app:rail_details}.

\textbf{Live Deployment with Utility Gating.}
After the shadow phase, the controller takes over intervention selection. At each candidate rollout state $z_t$, it scores all executable interventions and selects the one with the largest predicted recoverability gain: $b_t^{*}
    \in
    \arg\max_{b\in\mathcal{B}}
    \widehat{\Delta}_{\phi_t}(z_t,b)$.
To control how often rollout branching is triggered, RAIL uses a live utility gate $\eta$ and executes the selected intervention only when its predicted utility is sufficiently large:

\vspace{-10pt}
\begin{equation}
\label{eq:rail_live_gate}
    b_t
    =
    \begin{cases}
    b_t^{*}, &
    \text{if }
    \widehat{\Delta}_{\phi_t}(z_t,b_t^{\star}) > \eta,\\
    \varnothing, &
    \text{otherwise}.
    \end{cases}
\end{equation}
\vspace{-10pt}

Larger $\eta$ values make intervention more selective, while smaller values increase branching frequency. When the gate is not passed, RAIL continues the default rollout process without branching. During live training, with probability $\rho_t$, RAIL instead samples an intervention from an exploration distribution over $\mathcal{B}$ with nonzero support (Detailed in Assumption~\ref{assump:exploration}). Since live deployment only forward-scores a finite set of interventions with a lightweight side predictor, it adds negligible overhead relative to language-model rollout generation and environment interaction. Together with the shadow phase, this utility-gated rule turns recoverability learning into a practical online intervention mechanism: RAIL first obtains structured supervision safely, then deploys the controller selectively where predicted recoverability justifies additional rollout compute.
% {\color{blue}
% The deployment gate determines whether to intervene, whereas the regret
% analysis in Section~\ref{sec:method-learning} concerns only the choice among candidate interventions on rounds when the gate is passed.}

% {\color{red}
% \paragraph{Use of Intervened Rollouts in Policy Updates.}
% }

\textbf{Co-evolution with Policy Optimization.}
RAIL integrates with critic-free policy optimization by shaping the rollout groups from which rewards and relative advantages are computed. The controller intervenes at states predicted to yield richer reward contrast, producing more informative trajectories for the policy update. After each update, the changed policy induces new recoverability patterns, and executed interventions provide fresh traces $(z_t,b_t,y_t)$ for further controller training. Thus, RAIL forms an online co-evolution loop: the controller improves rollout allocation for the policy, while the evolving policy supplies new outcomes for the controller to track.

\begin{table*}[t]
\vspace{-15pt}
\centering
\begingroup
\setlength{\tabcolsep}{3.5pt}
\renewcommand{\arraystretch}{1.08}
\resizebox{\textwidth}{!}{
\begin{tabular}{l@{\hspace{-8pt}} c cccc ccc cc}
\toprule
\midrule
\multirow{2}{*}{\textbf{Method}} &
\multicolumn{1}{c}{\textbf{AgentBench OS}} &
\multicolumn{4}{c}{\textbf{AgentBench DB}} &
\multicolumn{3}{c}{\textbf{WebShop}} &
\multicolumn{2}{c}{\textbf{ToolQA Coffee}} \\
\cmidrule(lr){2-2}\cmidrule(lr){3-6}\cmidrule(lr){7-9}\cmidrule(lr){10-11}
& \textbf{Overall SR}
& \textbf{Overall SR} & \textbf{SELECT} & \textbf{INSERT} & \textbf{UPDATE}
& \textbf{Overall SR} & \textbf{L1} & \textbf{L2}
& \textbf{Overall SR} & \textbf{Hard} \\
\midrule

\rowcolor[RGB]{230,230,230}
\multicolumn{11}{c}{\textit{Direct Reasoning}} \\
\midrule
Qwen3-4B
& 23.84{\tiny$\pm$0.50}
& 53.42{\tiny$\pm$0.28} & 42.25{\tiny$\pm$0.50} & 40.82{\tiny$\pm$0.00} & 78.79{\tiny$\pm$0.82}
& 33.12{\tiny$\pm$0.57} & 37.45{\tiny$\pm$0.77} & 28.89{\tiny$\pm$0.69}
& 74.75{\tiny$\pm$0.66} & 55.00{\tiny$\pm$1.22} \\
Qwen3-8B
& 27.68{\tiny$\pm$0.59}
& 56.25{\tiny$\pm$0.95} & 43.25{\tiny$\pm$2.49} & 43.00{\tiny$\pm$0.00} & 82.50{\tiny$\pm$0.50}
& 21.50{\tiny$\pm$0.33} & 29.22{\tiny$\pm$0.77} & 14.20{\tiny$\pm$0.58}
& 75.00{\tiny$\pm$0.61} & 53.00{\tiny$\pm$1.22} \\
Qwen3-14B
& 28.04{\tiny$\pm$0.93}
& 54.17{\tiny$\pm$2.51} & 36.75{\tiny$\pm$3.34} & 45.75{\tiny$\pm$2.28} & 80.00{\tiny$\pm$6.56}
& 36.75{\tiny$\pm$0.43} & \underline{48.66{\tiny$\pm$0.67}} & 25.49{\tiny$\pm$0.80}
& 75.62{\tiny$\pm$0.65} & 51.25{\tiny$\pm$1.30} \\
Qwen3-32B
& 27.68{\tiny$\pm$0.93}
& 59.08{\tiny$\pm$0.14} & 43.50{\tiny$\pm$0.50} & 47.00{\tiny$\pm$0.71} & \textbf{86.75{\tiny$\pm$0.43}}
& 39.70{\tiny$\pm$0.99} & \textbf{49.69{\tiny$\pm$1.14}} & 30.25{\tiny$\pm$2.44}
& 76.62{\tiny$\pm$1.14} & 58.50{\tiny$\pm$1.50} \\

\midrule
\rowcolor[RGB]{230,230,230}
\multicolumn{11}{c}{\textit{GRPO Baselines w/o Rollout Intervention}} \\
\midrule
GRPO-8
& 25.54{\tiny$\pm$0.36}
& 55.42{\tiny$\pm$0.36} & 41.50{\tiny$\pm$1.00} & 45.50{\tiny$\pm$0.58} & 79.25{\tiny$\pm$0.50}
& 36.90{\tiny$\pm$0.57} & 40.64{\tiny$\pm$0.94} & 33.37{\tiny$\pm$0.69}
& 80.00{\tiny$\pm$0.35} & 64.00{\tiny$\pm$0.71} \\
GRPO-16
& 27.32{\tiny$\pm$0.68}
& 56.33{\tiny$\pm$0.41} & 40.75{\tiny$\pm$1.26} & 47.00{\tiny$\pm$0.00} & 81.25{\tiny$\pm$0.50}
& 37.20{\tiny$\pm$0.24} & 44.75{\tiny$\pm$0.67} & 30.06{\tiny$\pm$0.32}
& 81.88{\tiny$\pm$0.22} & 67.75{\tiny$\pm$0.43} \\
GRPO-32
& 27.50{\tiny$\pm$1.70}
& 57.67{\tiny$\pm$0.58} & 43.00{\tiny$\pm$2.00} & 48.50{\tiny$\pm$0.58} & 81.50{\tiny$\pm$0.58}
& 41.65{\tiny$\pm$1.21} & 41.15{\tiny$\pm$1.91} & 42.12{\tiny$\pm$1.08}
& 84.25{\tiny$\pm$1.03} & \underline{75.00{\tiny$\pm$0.00}} \\

\midrule
\rowcolor[RGB]{230,230,230}
\multicolumn{11}{c}{\textit{Step-level Entropy-guided Branching}} \\
\midrule
ARPO-16
& 28.75{\tiny$\pm$0.90}
& 59.83{\tiny$\pm$0.43} & 45.00{\tiny$\pm$0.82} & 49.50{\tiny$\pm$0.58} & \underline{85.00{\tiny$\pm$0.00}}
& 36.40{\tiny$\pm$0.42} & 42.28{\tiny$\pm$0.73} & 30.84{\tiny$\pm$0.17}
& 84.38{\tiny$\pm$0.22} & 74.75{\tiny$\pm$0.43} \\
ARPO-32
& 29.64{\tiny$\pm$0.41}
& \underline{60.25{\tiny$\pm$0.17}} & 45.75{\tiny$\pm$0.50} & \textbf{52.00{\tiny$\pm$0.00}} & 83.00{\tiny$\pm$0.00}
& 40.70{\tiny$\pm$0.41} & 46.09{\tiny$\pm$0.50} & 35.60{\tiny$\pm$0.65}
& 84.25{\tiny$\pm$0.25} & 74.50{\tiny$\pm$0.50} \\
AEPO
& 31.07{\tiny$\pm$0.41}
& 57.08{\tiny$\pm$1.32} & 44.50{\tiny$\pm$0.58} & 47.00{\tiny$\pm$2.16} & 79.75{\tiny$\pm$2.22}
& 41.55{\tiny$\pm$0.95} & 40.53{\tiny$\pm$1.85} & \underline{42.51{\tiny$\pm$0.89}}
& 83.62{\tiny$\pm$0.22} & 74.25{\tiny$\pm$0.43} \\

\midrule
\rowcolor[RGB]{230,230,230}
\multicolumn{11}{c}{\textit{Task-level Allocation \& Optimization}} \\
\midrule
VIP-16
& 28.57{\tiny$\pm$1.93}
& 56.67{\tiny$\pm$0.27} & 42.00{\tiny$\pm$0.82} & 47.00{\tiny$\pm$0.00} & 81.00{\tiny$\pm$0.00}
& 41.60{\tiny$\pm$0.42} & 44.55{\tiny$\pm$1.02} & 38.81{\tiny$\pm$0.17}
& 84.00{\tiny$\pm$0.94} & 73.75{\tiny$\pm$1.64} \\
VIP-32
& 30.71{\tiny$\pm$0.82}
& 58.25{\tiny$\pm$0.32} & \underline{46.50{\tiny$\pm$0.58}} & 48.50{\tiny$\pm$0.58} & 79.75{\tiny$\pm$0.50}
& \underline{44.25{\tiny$\pm$0.22}} & 46.09{\tiny$\pm$0.41} & \underline{42.51{\tiny$\pm$0.32}}
& \underline{84.50{\tiny$\pm$0.00}} & 74.00{\tiny$\pm$0.00} \\
Tree-GRPO
& 28.57{\tiny$\pm$0.58}
& 56.67{\tiny$\pm$0.38} & 42.75{\tiny$\pm$0.96} & 45.75{\tiny$\pm$0.50} & 81.50{\tiny$\pm$0.58}
& 38.30{\tiny$\pm$0.41} & 45.47{\tiny$\pm$0.68} & 31.52{\tiny$\pm$0.48}
& 83.25{\tiny$\pm$0.75} & 74.50{\tiny$\pm$0.50} \\
TAMPO
& \underline{31.25{\tiny$\pm$0.68}}
& 58.08{\tiny$\pm$0.42} & 43.00{\tiny$\pm$0.82} & 47.25{\tiny$\pm$0.96} & 84.00{\tiny$\pm$0.00}
& 40.95{\tiny$\pm$0.83} & 45.06{\tiny$\pm$0.74} & 37.06{\tiny$\pm$1.21}
& 81.25{\tiny$\pm$0.56} & 66.50{\tiny$\pm$1.12} \\

\midrule
\midrule
\rowcolor[RGB]{222,230,241}
\textbf{RAIL}
& \textbf{33.30{\tiny$\pm$0.35}}
& \textbf{61.67{\tiny$\pm$0.41}} & \textbf{50.50{\tiny$\pm$1.29}} & \underline{51.00{\tiny$\pm$0.82}} & 83.50{\tiny$\pm$0.58}
& \textbf{45.85{\tiny$\pm$0.38}} & 46.19{\tiny$\pm$0.53} & \textbf{45.53{\tiny$\pm$0.95}}
& \textbf{86.75{\tiny$\pm$0.56}} & \textbf{78.50{\tiny$\pm$1.12}} \\
\bottomrule
\end{tabular}
}
\endgroup
\vspace{-5pt}
\caption{Main results on agentic reasoning benchmarks. We report success rate (SR) across four benchmarks and fine-grained splits. Baselines are trained with Qwen3-4B-Instruct. Best results are bolded and second-best results are underlined. Numerical suffixes denote the max rollout budget.}
\label{tab:main_results}
\vspace{-20pt}
\end{table*}

\vspace{-10pt}
\section{Experiments}
\label{sec:exp}
\vspace{-5pt}
\subsection{Experiment Setup}
\label{sec:exp_setup}
\vspace{-5pt}

\textbf{Benchmarks.}
We evaluate RAIL on four benchmarks in agentic reasoning spanning classic tool-interaction types. AgentBench-OS and AgentBench-DB \citep{liu2024agentbench} evaluate reasoning in interactive operating-system and database environments. WebShop~\citep{yao2022webshop} evaluates web-based product search and purchase, requiring agents to navigate webpages. ToolQA-Coffee~\citep{zhuang2023toolqa} evaluates tabular data, where the agent uses a Python interpreter to perform data analysis. Across all benchmarks, we report success rate (SR) for the same four consecutive seeds with its average. Detailed dataset descriptions and configurations are provided in Appendix~\ref{app:benchmark}.

\textbf{Baselines. }
To align with our scope, we focus on the following baselines related to rollout intervention, including: (i) \textbf{GRPO}, without adaptive intervention \citep{shao2024deepseekmath}; (ii) \textbf{step-level branching}, such as ARPO \citep{dong2025agentic} and AEPO \citep{dong2026agentic}, which spawn extra rollouts from high-entropy states; (iii) \textbf{task-level budget allocation}, e.g., VIP \citep{nguyen2026adaptive} and TAMPO \citep{dang2026temperature} which allocates rollouts or changes decoding temperatures from a predicted per-task success signals; and Tree-GRPO \citep{ji2025tree}, which share trajectory prefixes to derive step-level advantages. All methods are evaluated under a unified protocol with Qwen3-4B-Instruct. More implementation and adaptation details are provided in Appendix~\ref{app:baselines}.

\begin{figure}[t]
	\centering
    \vspace{-20pt}
	\includegraphics[width=1\linewidth]{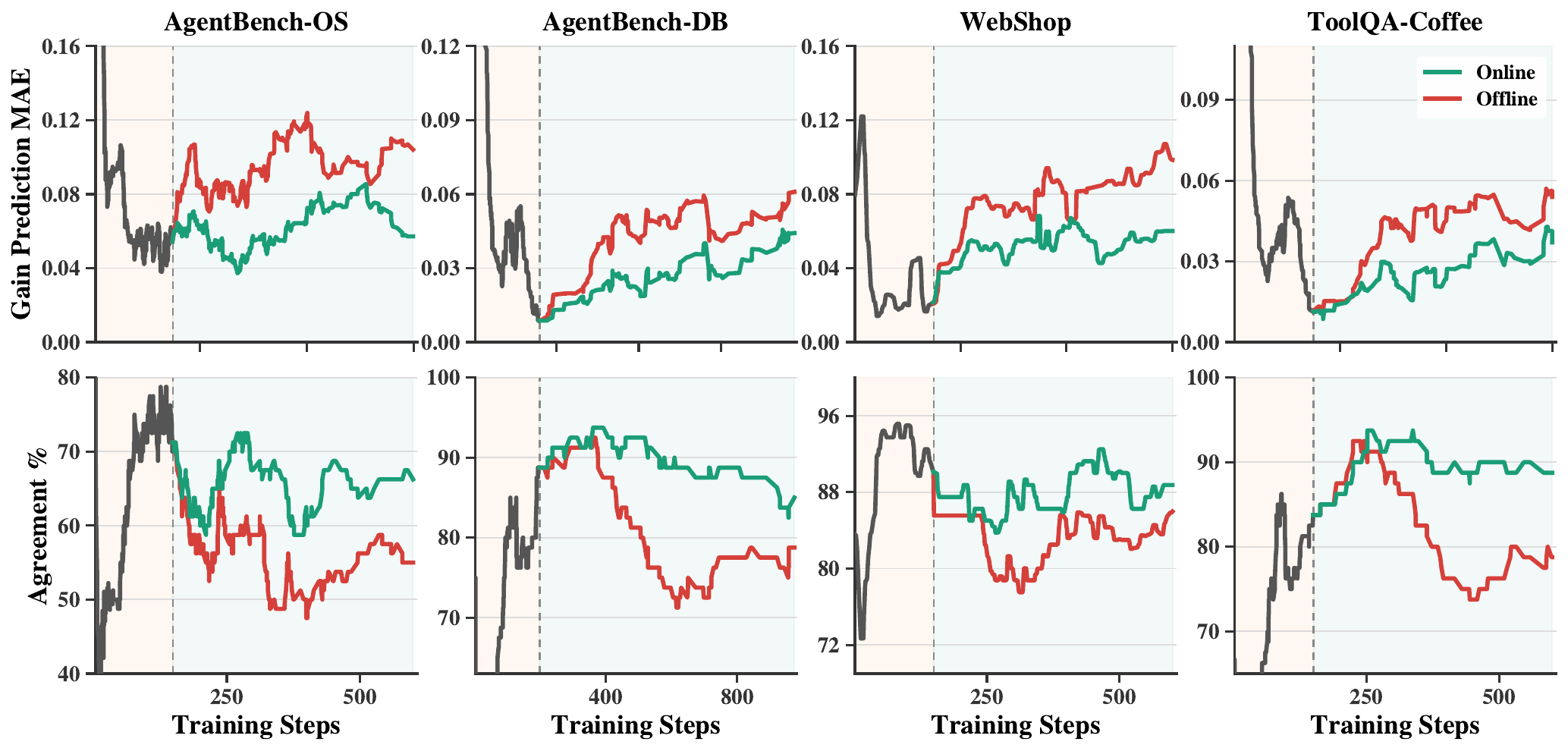}
        \vspace{-20pt}
	\caption{\textbf{Adaptivity of the recoverability controller.} In live mode after the shadow phase (black), online updating (green) yields lower MAE and higher sign agreement than a frozen controller (red), indicating that recoverability evolves with the policy and benefits from online tracking.
    }
    \vspace{-15pt}    
    \label{fig:adapt}
\end{figure}

\vspace{-5pt}
\subsection{Experiment Results}

\textbf{RQ1: Effectiveness.}
We present the main results of RAIL in Table~\ref{tab:main_results}. Overall, 
RAIL consistently outperforms all baselines across four agentic reasoning benchmarks, 
% {\color{red} and the best result on most fine-grained splits}, 
demonstrating that learning rollout intervention from realized recoverability gains provides more effective optimization signals than uniform sampling or fixed heuristic allocation. Moreover, several insights emerge. \textbf{First, adaptive rollout intervention is essential for agentic GRPO.} 
Uniform GRPO improves over direct reasoning, but the gains saturate as the rollout budget increases. This indicates that simply increasing rollout count is insufficient. \textbf{Second, heuristic intervention improves GRPO but is not consistently reliable.} While methods such as ARPO and VIP perform strong on some tasks, no heuristic baseline dominates across environments. This supports our motivation that these heuristic rules are only indirect proxies for recoverability, and their utility varies across task types. \textbf{Third, RAIL gains come from learning where and how to intervene.} The gains are particularly clear on difficult splits (e.g. ToolQA-Hard, WebShop-L2), where RAIL improves over the best baseline by a large margin. \textbf{Overall,} these results show that recoverability-aware intervention produces more informative rollout groups, leading to stronger and more consistent policy improvement in agentic RL.

\textbf{RQ2: Adaptability.}
To answer RQ2, we compare RAIL's online controller with an offline variant whose controller is trained during the shadow phase but frozen after entering live mode. Figure~\ref{fig:adapt} reports prediction MAE and sign agreement between predicted and realized intervention gains across benchmarks. Averaged over the live phase, online updating reduces MAE and improves sign agreement consistently across benchmarks. Lower MAE indicates more accurate recoverability estimation, while higher sign agreement indicates better judgment of whether an intervention is beneficial. We notice \textbf{a fixed controller track policy-induced non-stationarity poorly.} 
\begin{wrapfigure}{t}{0.5\textwidth}
\vspace{-8pt}
\centering
\includegraphics[width=\linewidth]{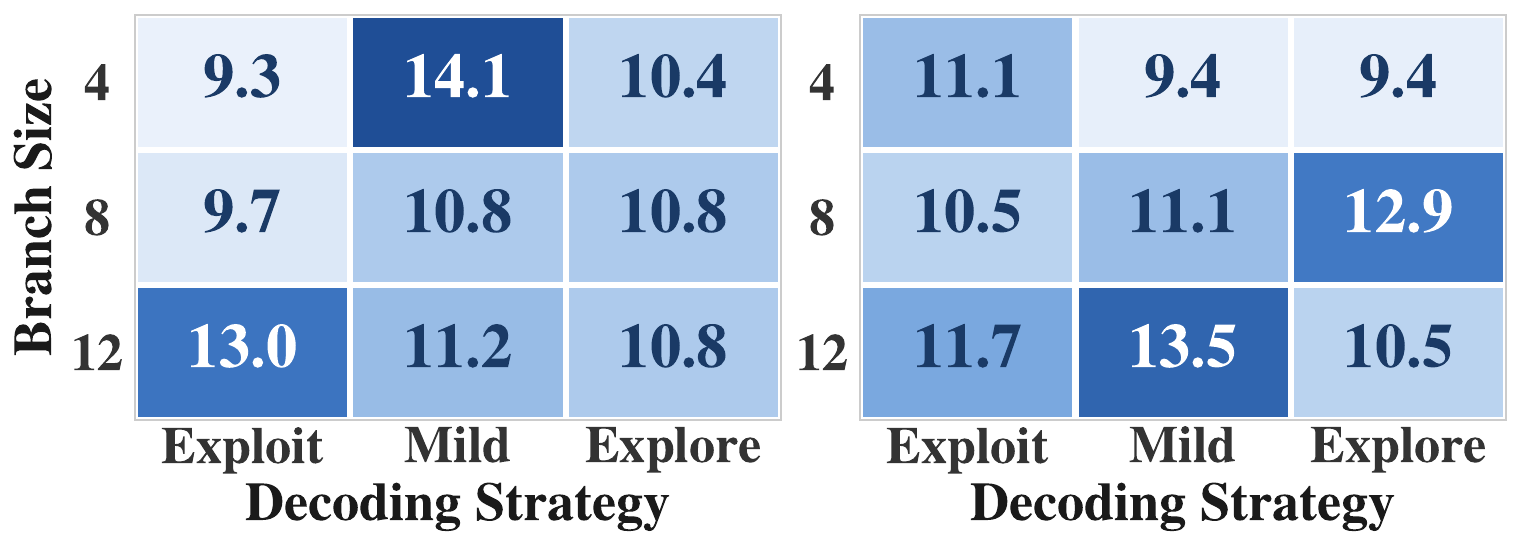}
\vspace{-20pt}
\caption{Learned branching preferences across AgentBench-OS (Left) and AgentBench-DB (Right). Each cell reports intervention frequency (\%), normalized to 100\% per panel. RAIL demonstrate distinct intervention patterns instead of collapsing to a universal choice.}
\label{fig:heatmap}
\vspace{-15pt}
\end{wrapfigure}
After entering live mode, the offline controller exhibits larger prediction errors and declining sign agreement, which showcases that recoverability is not a stationary property of a task or state: as the policy changes, the same uncertainty or branching pattern can correspond to different intervention gains, and continuing to update the controller from newly observed intervention traces benefits the adaptation to the evolving policy. \textbf{Overall,} these results support RAIL's core goal: rollout intervention should not be fixed, as effective intervention depends on recoverability signals that shift during training.

\textbf{RQ3: Expressiveness.}
To answer RQ3, we examine how RAIL benefits from modeling rollout intervention as a structured decision. Figure~\ref{fig:heatmap} visualizes the empirical
intervention distribution selected by the learned controller. RAIL assigns non-trivial probability to multiple branch-size and decoding-regime pairs, rather than collapsing to a single intervention. Meanwhile, the preferred regions differ across environments, which supports our hypothesis that useful rollout
intervention is task-dependent and should be learned from realized traces. To further isolate this effect, Table~\ref{tab:ablation} reports controlled ablations where all training settings are kept unchanged except the intervention space or the shadow/live split and we have several insights: First, scalar intervention is insufficient. The scalar variants underperform the full structured space,
% {\color{red} Although scalar variants can be competitive in some settings (Opt.=4), the full RAIL space achieves stronger overall performance,}
showing that a non-scalar action space to coordinate benefits our task. Second, intervention-space granularity exhibits a trade-off. Increasing scalar options can help the process, but not uniformly true across all cases. Finally, although longer shadow training provides more initial traces, it also delays live deployment and shortens the period in which intervention learning co-evolves with the policy. \textbf{We introduce a variant here with dynamic shadow gate, which we elaborate the design in Appendix~\ref{app:rail_details}.} Overall, these results validate RAIL's core design: effective rollout intervention requires a compact but structured action space, learned online from realized intervention outcomes.

\begin{figure}[t]
	\centering
    \vspace{-20pt}
	\includegraphics[width=1\linewidth]{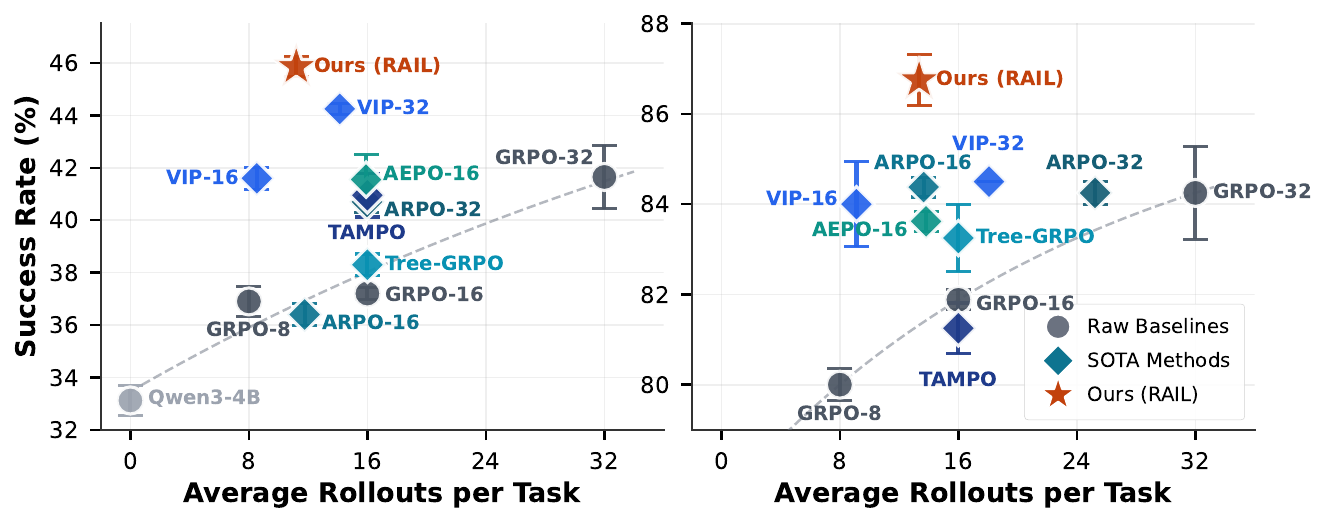}
        \vspace{-20pt}
	\caption{Performance-rollout Results Preview on WebShop (Left) and ToolQA Coffee (Right). RAIL achieves the highest success rates with comparable or fewer rollouts than other intervention baselines.
    }
    \vspace{-5pt}
        
    \label{fig:efficient}
\end{figure}

\begin{table}[t]
\centering
\vspace{-6pt}

\begin{minipage}[t]{0.50\textwidth}
\centering
\setlength{\tabcolsep}{2.1pt}
\renewcommand{\arraystretch}{1.06}
\footnotesize
\resizebox{\linewidth}{!}{
\begin{tabular}{c cc cc}
\toprule
\textbf{Method}
& \textbf{OS}
& \textbf{$\Delta$}
& \textbf{DB}
& \textbf{$\Delta$} \\
\midrule\midrule

\rowcolor[RGB]{230,230,230}
\multicolumn{5}{c}{\textit{Component Removal}} \\
\midrule
w/o Branch.
& 30.89{\tiny$\pm$0.31}
& \textcolor{red!70!black}{\small$\downarrow$7.24\%}
& 59.67{\tiny$\pm$0.27}
& \textcolor{red!70!black}{\small$\downarrow$3.24\%} \\

w/o Decod.
& 31.79{\tiny$\pm$1.07}
& \textcolor{red!70!black}{\small$\downarrow$4.53\%}
& 59.92{\tiny$\pm$0.17}
& \textcolor{red!70!black}{\small$\downarrow$2.84\%} \\

\midrule
\rowcolor[RGB]{230,230,230}
\multicolumn{5}{c}{\textit{Scalar Intervention}} \\
\midrule
Opt.=2
& 29.29{\tiny$\pm$1.01}
& \textcolor{red!70!black}{\small$\downarrow$12.04\%}
& 60.83{\tiny$\pm$0.43}
& \textcolor{red!70!black}{\small$\downarrow$1.36\%} \\

Opt.=4
& \underline{34.11{\tiny$\pm$0.78}}
& \textcolor{green!50!black}{\small$\uparrow$2.43\%}
& 59.75{\tiny$\pm$0.50}
& \textcolor{red!70!black}{\small$\downarrow$3.11\%} \\

\midrule
\rowcolor[RGB]{230,230,230}
\multicolumn{5}{c}{\textit{Shadow Length}} \\
\midrule
Shadow 1/3
& 30.36{\tiny$\pm$1.07}
& \textcolor{red!70!black}{\small$\downarrow$8.83\%}
& 59.58{\tiny$\pm$0.17}
& \textcolor{red!70!black}{\small$\downarrow$3.39\%} \\

Shadow 1/2
& 31.07{\tiny$\pm$0.80}
& \textcolor{red!70!black}{\small$\downarrow$6.70\%}
& 59.25{\tiny$\pm$0.42}
& \textcolor{red!70!black}{\small$\downarrow$3.92\%} \\

Dynamic
& 32.68{\tiny$\pm$0.93}
& \textcolor{red!70!black}{\small$\downarrow$1.86\%}
& 59.83{\tiny$\pm$0.33}
& \textcolor{red!70!black}{\small$\downarrow$2.98\%} \\

\bottomrule
\end{tabular}
}
\vspace{-4pt}
\caption{Ablation study on AgentBench.}
\label{tab:ablation}
\end{minipage}
\hfill
\begin{minipage}[t]{0.47\textwidth}
\centering
\setlength{\tabcolsep}{3pt}
\renewcommand{\arraystretch}{1.06}
\footnotesize
\resizebox{\linewidth}{!}{
\begin{tabular}{c cccc}
\toprule
\textbf{Method} 
& \textbf{OS} 
& \textbf{DB} 
& \textbf{WebShop} 
& \textbf{ToolQA} \\
\midrule\midrule

\rowcolor[RGB]{230,230,230}
\multicolumn{5}{c}{\textit{Step-level Branching}} \\
\midrule
ARPO-16 & 14.68 & 12.95 & 11.76 & 13.66 \\
ARPO-32 & 30.04 & 24.24 & 25.97 & 25.23 \\
AEPO    & 12.43 & 13.85 & 15.91 & 13.82 \\

\midrule
\rowcolor[RGB]{230,230,230}
\multicolumn{5}{c}{\textit{Task-level Allocation}} \\
\midrule
VIP-16
& \underline{11.09} & \textbf{9.18} & \textbf{8.53} & \textbf{9.12} \\
VIP-32
& 24.11 & 14.88 & 14.13 & 18.07 \\
Tree-GRPO
& 16 & 16 & 16 & 16 \\
TAMPO
& 16 & 16 & 16 & 16 \\

\midrule\midrule
\rowcolor[RGB]{222,230,241}
\textbf{RAIL}
& \textbf{10.86} & \underline{12.03} & \underline{11.20} & \underline{13.35} \\
\bottomrule
\end{tabular}
}
\vspace{-4pt}
\caption{Average rollout across benchmarks.}
\label{tab:efficiency}
\end{minipage}

\vspace{-20pt}
\end{table}

\textbf{RQ4: Efficiency.}
Finally, Table~\ref{tab:efficiency} reports the average rollout budget used by each method. RAIL uses substantially fewer rollouts than high-budget adaptive baselines. Together with Table~\ref{tab:main_results}, Figures~\ref{fig:opening} and~\ref{fig:efficient} show a clear performance-rollout advantage: 
%RAIL achieves higher success rates while requiring fewer rollouts. 
% {\color{red} RAIL achieves higher success rates than low-budget baselines while avoiding the large rollout costs of high-budget baselines (e.g. ARPO, VIP).}
%In particular, 
RAIL outperforms GRPO-32 across all overall metrics while using less than half of its rollout budget. Compared with adaptive baselines, RAIL also avoids the large budgets used by ARPO-32 and VIP-32 while achieving stronger performance. \textbf{Overall,} these results indicate that recoverability-aware intervention improves efficiency by directing rollout compute toward states where additional exploration is likely to produce useful training signals.

\section{Conclusion}

We introduced RAIL, a recoverability-aware framework that turns rollout generation from a fixed sampling procedure into a learnable intervention process. By learning from realized intervention gains, RAIL adapts to policy-induced non-stationarity and selects structured interventions over where and how to branch. Across agentic reasoning benchmarks, RAIL consistently improves effectiveness, adaptivity, expressiveness and efficiency over uniform GRPO and heuristic intervention baselines. These results suggest that the rollout-generation process itself should be treated as an optimization object, enabling post-training to learn from stronger and less redundant signals.

\newpage
\bibliography{iclr2026_conference}

@article{nguyen2026adaptive,
  title={Adaptive Rollout Allocation for Online Reinforcement Learning with Verifiable Rewards},
  author={Nguyen, Hieu Trung and Nguyen, Bao and Ma, Wenao and Zhao, Yuzhi and She, Ruifeng and Nguyen, Viet Anh},
  journal={arXiv preprint arXiv:2602.01601},
  year={2026}
}

@article{fang2026allocate,
  title={How to Allocate, How to Learn? Dynamic Rollout Allocation and Advantage Modulation for Policy Optimization},
  author={Fang, Yangyi and Lin, Jiaye and Fu, Xiaoliang and Qin, Cong and Shi, Haolin and Hu, Chaowen and Pan, Lu and Zeng, Ke and Cai, Xunliang},
  journal={arXiv preprint arXiv:2602.19208},
  year={2026}
}

@article{yao2026coba,
  title={CoBA-RL: Capability-Oriented Budget Allocation for Reinforcement Learning in LLMs},
  author={Yao, Zhiyuan and Zhang, Yi-Kai and Chen, Yuxin and Sun, Yueqing and Xu, Zishan and Yang, Yu and Hu, Tianhao and Gu, Qi and Su, Hui and Cai, Xunliang},
  journal={arXiv preprint arXiv:2602.03048},
  year={2026}
}

@article{panaganti2026group,
  title={Group Distributionally Robust Optimization-Driven Reinforcement Learning for LLM Reasoning},
  author={Panaganti, Kishan and Liang, Zhenwen and Yu, Wenhao and Mi, Haitao and Yu, Dong},
  journal={arXiv preprint arXiv:2601.19280},
  year={2026}
}

@article{yang2025depth,
  title={Depth-breadth synergy in rlvr: Unlocking llm reasoning gains with adaptive exploration},
  author={Yang, Zhicheng and Guo, Zhijiang and Huang, Yinya and Wang, Yongxin and Xie, Dongchun and Li, Hanhui and Wang, Yiwei and Liang, Xiaodan and Tang, Jing},
  journal={arXiv preprint arXiv:2508.13755},
  year={2025}
}

@article{xiong2025reinforce,
  title={Reinforce-Ada: An Adaptive Sampling Framework under Non-linear RL Objectives},
  author={Xiong, Wei and Ye, Chenlu and Liao, Baohao and Dong, Hanze and Xu, Xinxing and Monz, Christof and Bian, Jiang and Jiang, Nan and Zhang, Tong},
  journal={arXiv preprint arXiv:2510.04996},
  year={2025}
}

@article{zhao2026training,
  title={Training Multi-Turn Search Agent via Contrastive Dynamic Branch Sampling},
  author={Zhao, Yubao and Huang, Weiquan and Wang, Sudong and Zhao, Ruochen and Chen, Chen and Shu, Yao and Qin, Chengwei},
  journal={arXiv preprint arXiv:2602.03719},
  year={2026}
}

@article{wang2026ragen, 
  title={RAGEN-2: Reasoning Collapse in Agentic RL},
  author={Wang, Zihan and Gui, Chi and Jin, Xing and Wang, Qineng and Liu, Licheng and Wang, Kangrui and Chen, Shiqi and Li, Linjie and Yang, Zhengyuan and Zhang, Pingyue and others},
  journal={arXiv preprint arXiv:2604.06268},
  year={2026}
}

@article{duan2025uprop,
  title={Uprop: Investigating the uncertainty propagation of llms in multi-step agentic decision-making},
  author={Duan, Jinhao and Diffenderfer, James and Madireddy, Sandeep and Chen, Tianlong and Kailkhura, Bhavya and Xu, Kaidi},
  journal={arXiv preprint arXiv:2506.17419},
  year={2025}
}

@article{zheng2025first,
  title={First return, entropy-eliciting explore},
  author={Zheng, Tianyu and Xing, Tianshun and Gu, Qingshui and Liang, Taoran and Qu, Xingwei and Zhou, Xin and Li, Yizhi and Wen, Zhoufutu and Lin, Chenghua and Huang, Wenhao and others},
  journal={arXiv preprint arXiv:2507.07017},
  year={2025}
}

@article{wei2026entropy,
  title={Entropy-Tree: Tree-Based Decoding with Entropy-Guided Exploration},
  author={Wei, Longxuan and Zhang, Yubo and Zhang, Zijiao and Wang, Zhihu and Zhao, Shiwan and Huang, Tianyu and Zhao, Huiting and Liu, Chenfei and Zhang, Shenao and Yan, Junchi},
  journal={arXiv preprint arXiv:2601.15296},
  year={2026}
}

@article{dong2025agentic,
  title={Agentic reinforced policy optimization},
  author={Dong, Guanting and Mao, Hangyu and Ma, Kai and Bao, Licheng and Chen, Yifei and Wang, Zhongyuan and Chen, Zhongxia and Du, Jiazhen and Wang, Huiyang and Zhang, Fuzheng and others},
  journal={arXiv preprint arXiv:2507.19849},
  year={2025}
}

@article{dong2026agentic,
  title={Agentic entropy-balanced policy optimization},
  author={Dong, Guanting and Bao, Licheng and Wang, Zhongyuan and Zhao, Kangzhi and Li, Xiaoxi and Jin, Jiajie and Yang, Jinghan and Mao, Hangyu and Zhang, Fuzheng and Gai, Kun and others},
  journal={arXiv preprint arXiv:2510.14545},
  year={2025}
}

@article{ji2025tree,
  title={Tree search for llm agent reinforcement learning},
  author={Ji, Yuxiang and Ma, Ziyu and Wang, Yong and Chen, Guanhua and Chu, Xiangxiang and Wu, Liaoni},
  journal={arXiv preprint arXiv:2509.21240},
  year={2025}
}

@article{shao2024deepseekmath,
  title={Deepseekmath: Pushing the limits of mathematical reasoning in open language models},
  author={Shao, Zhihong and Wang, Peiyi and Zhu, Qihao and Xu, Runxin and Song, Junxiao and Bi, Xiao and Zhang, Haowei and Zhang, Mingchuan and Li, YK and Wu, Yang and others},
  journal={arXiv preprint arXiv:2402.03300},
  year={2024}
}

@book{sutton1998reinforcement,
  title={Reinforcement learning: An introduction},
  author={Sutton, Richard S and Barto, Andrew G and others},
  year={1998},
  publisher={MIT press Cambridge}
}

@article{dang2026temperature,
  title={Temperature as a Meta-Policy: Adaptive Temperature in LLM Reinforcement Learning},
  author={Dang, Haoran and Lan, Cuiling and Wan, Hai and Zhao, Xibin and Lu, Yan},
  journal={arXiv preprint arXiv:2602.11779},
  year={2026}
}

@article{tong2024dart,
  title={Dart-math: Difficulty-aware rejection tuning for mathematical problem-solving},
  author={Tong, Yuxuan and Zhang, Xiwen and Wang, Rui and Wu, Ruidong and He, Junxian},
  journal={Advances in Neural Information Processing Systems},
  volume={37},
  pages={7821--7846},
  year={2024}
}

@article{rafailov2023direct,
  title={Direct preference optimization: Your language model is secretly a reward model},
  author={Rafailov, Rafael and Sharma, Archit and Mitchell, Eric and Manning, Christopher D and Ermon, Stefano and Finn, Chelsea},
  journal={Advances in neural information processing systems},
  volume={36},
  pages={53728--53741},
  year={2023}
}

@article{liu2025understanding,
  title={Understanding r1-zero-like training: A critical perspective},
  author={Liu, Zichen and Chen, Changyu and Li, Wenjun and Qi, Penghui and Pang, Tianyu and Du, Chao and Lee, Wee Sun and Lin, Min},
  journal={arXiv preprint arXiv:2503.20783},
  year={2025}
}

@inproceedings{ahmadian2024back,
  title={Back to basics: Revisiting REINFORCE-style optimization for learning from human feedback in LLMs},
  author={Ahmadian, Arash and Cremer, Chris and Gall{\'e}, Matthias and Fadaee, Marzieh and Kreutzer, Julia and Pietquin, Olivier and {\"U}st{\"u}n, Ahmet and Hooker, Sara},
  booktitle={Proceedings of the 62nd Annual Meeting of the Association for Computational Linguistics (Volume 1: Long Papers)},
  pages={12248--12267},
  year={2024}
}

@article{auer2002finite,
  title={Finite-time analysis of the multiarmed bandit problem},
  author={Auer, Peter and Cesa-Bianchi, Nicolo and Fischer, Paul},
  journal={Machine learning},
  volume={47},
  number={2},
  pages={235--256},
  year={2002},
  publisher={Springer}
}

@article{auer2002nonstochastic,
  title={The nonstochastic multiarmed bandit problem},
  author={Auer, Peter and Cesa-Bianchi, Nicolo and Freund, Yoav and Schapire, Robert E},
  journal={SIAM journal on computing},
  volume={32},
  number={1},
  pages={48--77},
  year={2002},
  publisher={SIAM}
}

@article{langford2008epoch,
  title={The epoch-greedy algorithm for multi-armed bandits with side information},
  author={Langford, John and Zhang, Tong},
  journal={Advances in neural information processing systems},
  volume={20},
  year={2007}
}

@inproceedings{agarwal2014taming,
  title={Taming the monster: A fast and simple algorithm for contextual bandits},
  author={Agarwal, Alekh and Hsu, Daniel and Kale, Satyen and Langford, John and Li, Lihong and Schapire, Robert},
  booktitle={International conference on machine learning},
  pages={1638--1646},
  year={2014},
  organization={PMLR}
}

@inproceedings{garivier2011upper,
  title={On upper-confidence bound policies for switching bandit problems},
  author={Garivier, Aur{\'e}lien and Moulines, Eric},
  booktitle={International conference on algorithmic learning theory},
  pages={174--188},
  year={2011},
  organization={Springer}
}

@article{besbes2014stochastic,
  title={Stochastic multi-armed-bandit problem with non-stationary rewards},
  author={Besbes, Omar and Gur, Yonatan and Zeevi, Assaf},
  journal={Advances in neural information processing systems},
  volume={27},
  year={2014}
}

@article{besbes2015nonstationary,
  title={Non-stationary stochastic optimization},
  author={Besbes, Omar and Gur, Yonatan and Zeevi, Assaf},
  journal={Operations research},
  volume={63},
  number={5},
  pages={1227--1244},
  year={2015},
  publisher={INFORMS}
}

@inproceedings{liu2024agentbench,
  title={Agentbench: Evaluating llms as agents},
  author={Liu, Xiao and Yu, Hao and Zhang, Hanchen and Xu, Yifan and Lei, Xuanyu and Lai, Hanyu and Gu, Yu and Ding, Hangliang and Men, Kaiwen and Yang, Kejuan and others},
  booktitle={International Conference on Learning Representations},
  volume={2024},
  pages={52989--53046},
  year={2024}
}

@article{yao2022webshop,
  title={Webshop: Towards scalable real-world web interaction with grounded language agents},
  author={Yao, Shunyu and Chen, Howard and Yang, John and Narasimhan, Karthik},
  journal={Advances in Neural Information Processing Systems},
  volume={35},
  pages={20744--20757},
  year={2022}
}

@article{zhuang2023toolqa,
  title={Toolqa: A dataset for llm question answering with external tools},
  author={Zhuang, Yuchen and Yu, Yue and Wang, Kuan and Sun, Haotian and Zhang, Chao},
  journal={Advances in Neural Information Processing Systems},
  volume={36},
  pages={50117--50143},
  year={2023}
}

@inproceedings{zhang2026mapro,
  title={MAPRO: Recasting Multi-Agent Prompt Optimization as Maximum a Posteriori Inference},
  author={Zhang, Zheyuan and Ge, Lin and Li, Hongjiang and Zhu, Weicheng and Zhang, Chuxu and Ye, Yanfang},
  booktitle={Findings of the Association for Computational Linguistics: EACL 2026},
  pages={4458--4480},
  year={2026}
}

@article{zhang2025agentrouter,
  title={AgentRouter: A Knowledge-Graph-Guided LLM Router for Collaborative Multi-Agent Question Answering},
  author={Zhang, Zheyuan and Shi, Kaiwen and Yuan, Zhengqing and Wang, Zehong and Ma, Tianyi and Murugesan, Keerthiram and Galassi, Vincent and Zhang, Chuxu and Ye, Yanfang},
  journal={arXiv preprint arXiv:2510.05445},
  year={2025}
}

@inproceedings{shi2026ng,
  title={NG-Router: Graph-Supervised Multi-Agent Collaboration for Nutrition Question Answering},
  author={Shi, Kaiwen and Zhang, Zheyuan and Yuan, Zhengqing and Murugesan, Keerthiram and Galassi, Vincent and Zhang, Chuxu and Ye, Yanfang},
  booktitle={Proceedings of the 19th Conference of the European Chapter of the Association for Computational Linguistics (Volume 1: Long Papers)},
  pages={7508--7527},
  year={2026}
}

@article{huang2026evolverouter,
  title={EvolveRouter: Co-Evolving Routing and Prompt for Multi-Agent Question Answering},
  author={Huang, Jiatan and Zhang, Zheyuan and Shi, Kaiwen and Ye, Yanfang and Zhang, Chuxu},
  journal={arXiv preprint arXiv:2604.05149},
  year={2026}
}

@article{bao2026drift,
  title={Drift-Bench: Diagnosing Cooperative Breakdowns in LLM Agents under Input Faults via Multi-Turn Interaction},
  author={Bao, Han and Zhang, Zheyuan and Jing, Pengcheng and Yuan, Zhengqing and Shi, Kaiwen and Ye, Yanfang},
  journal={arXiv preprint arXiv:2602.02455},
  year={2026}
}

@article{zhang2026semantic,
  title={Why semantic entropy fails: Geometry-aware and calibrated uncertainty for policy optimization},
  author={Zhang, Zheyuan and Shi, Kaiwen and Bao, Han and Wang, Zehong and Ma, Tianyi and Ye, Yanfang},
  journal={arXiv preprint arXiv:2605.21801},
  year={2026}
}

@article{shi2026sage,
  title={SAGE: Answer-Conditioned Uncertainty Targets for Verbal Uncertainty Alignment},
  author={Shi, Kaiwen and Zhang, Zheyuan and Ye, Yanfang},
  journal={arXiv preprint arXiv:2606.11512},
  year={2026}
}

@article{ye2025llms4all,
  title={Llms4all: A review of large language models across academic disciplines},
  author={Ye, Yanfang and Zhang, Zheyuan and Ma, Tianyi and Wang, Zehong and Li, Yiyang and Hou, Shifu and Sun, Weixiang and Shi, Kaiwen and Ma, Yijun and Song, Wei and others},
  journal={arXiv preprint arXiv:2509.19580},
  year={2025}
}

@article{ma2024agentboard,
  title={Agentboard: An analytical evaluation board of multi-turn llm agents},
  author={Ma, Chang and Zhang, Junlei and Zhu, Zhihao and Yang, Cheng and Yang, Yujiu and Jin, Yaohui and Lan, Zhenzhong and Kong, Lingpeng and He, Junxian},
  journal={Advances in neural information processing systems},
  volume={37},
  pages={74325--74362},
  year={2024}
}

@article{zou2026trace,
  title   = {{TRACE}: A Unified Rollout Budget Allocation Framework for Efficient Agentic Reinforcement Learning},
  author  = {Zou, Heming and Wang, Qi and Qu, Yun and Jiang, Yuhang and Cai, Lizhou and Mao, Yixiu and Peng, Ru and Xu, Xin and Liu, Weijie and Yang, Kai and Yang, Saiyong and Ji, Xiangyang},
  journal = {arXiv preprint arXiv:2606.11119},
  year    = {2026}
}

@article{han2026three,
  title   = {{3SPO}: State-Score-Supervised Policy Optimization for LLM Agents},
  author  = {Han, Yu and Li, Kailing and Jiao, Yang and Dai, Yulin and Fu, Yuqian and Zhuo, Linhai and Qian, Tianwen},
  journal = {arXiv preprint arXiv:2606.09961},
  year    = {2026}
}

@article{zhang2026igrpo,
  title   = {Information Gain-Based Rollout Policy Optimization: An Adaptive Tree-Structured Rollout Approach for Multi-Turn LLM Agents},
  author  = {Zhang, Yijun and Xu, Fan and Ding, Jiaxin and Xie, Yule and Gao, Shiqing and Ding, Xin and Zhang, Haoxiang and Fu, Luoyi and Wang, Xinbing},
  journal = {arXiv preprint arXiv:2607.06223},
  year    = {2026}
}

@inproceedings{xiong2026scaling,
  title={Scaling search-augmented llm reasoning via adaptive information control},
  author={Xiong, Siheng and Gungordu, Oguzhan and Johnson, Blair and Kerce, James Clayton and Fekri, Faramarz},
  booktitle={The 1st Workshop on Scaling Post-training for LLMs}
}

@inproceedings{xiong2026enhancing,
  title={Enhancing language model reasoning with structured multi-level modeling},
  author={Xiong, Siheng and Payani, Ali and Fekri, Faramarz},
  booktitle={International Conference on Learning Representations},
  volume={2026},
  pages={36557--36610},
  year={2026}
}

@article{xiong2025enhancing,
  title={Enhancing long chain-of-thought reasoning through multi-path plan aggregation},
  author={Xiong, Siheng and Payani, Ali and Fekri, Faramarz},
  journal={arXiv preprint arXiv:2510.11620},
  year={2025}
}
\bibliographystyle{iclr2026_conference}

\newpage
\appendix

\startcontents[appendix]
\section*{Appendix Content Table}

\printcontents[appendix]{}{1}{\setcounter{tocdepth}{3}}

\newpage

\section{Related Work}
Large language models (LLMs) have advanced rapidly in recent years \cite{ye2025llms4all, xiong2026scaling, zhang2026mapro, xiong2026enhancing, xiong2025enhancing, zhang2025agentrouter}. Building on this progress, LLM-driven agents and agentic reinforcement learning have gained prominence for their ability to plan, interact, and solve complex tasks with limited human oversight \cite{zhang2026semantic, shi2026sage, shi2026ng, huang2026evolverouter, bao2026drift}. A central practical challenge accompanying these successes is that RL for LLM reasoning and tool-using agents faces a core inefficiency: rollout budgets are typically allocated uniformly, despite large variation in difficulty and informativeness. Early work addresses this at the \emph{prompt level} by allocating more rollouts to harder or more informative problems, using success rate \citep{yang2025depth, xiong2025reinforce, zhao2026training}, capability \citep{yao2026coba, panaganti2026group}, or reward variance \citep{nguyen2026adaptive, fang2026allocate} as proxies. While effective, these methods rely on final outcomes and thus overlook where uncertainty arises within multi-step reasoning. This issue is amplified in agentic too use settings, where rewards are sparse and delayed, and failures often originate from specific intermediate decisions, making prompt-level signals insufficient.

To capture finer-grained structure, recent work shifts to \emph{step-level allocation}. However, direct reward signals are harder to retrieve at within steps, motivating the use of \emph{internal signals} such as \emph{Uncertainty}. For example, entropy-based methods identify locally uncertain steps and trigger branching accordingly \citep{zheng2025first, wei2026entropy}, while variants further compare uncertainty across steps or between question and tool states to decide whether to expand exploration \citep{dong2025agentic, dong2026agentic}. Yet entropy only reflects local ambiguity. To distinguish meaningful from spurious uncertainty, mutual-information-based methods measure dependency between reasoning steps and inputs, capturing whether a branch is informative or merely noisy \citep{wang2026ragen, duan2025uprop}.

Recent concurrent work further learns allocation signals at the state or prefix level. TRACE~\citep{zou2026trace} predicts conditional success from prefix histories and allocates visits to promising intermediate anchors. 3SPO~\citep{han2026three} derives state scores from historical success statistics and couples adaptive rollout allocation with step-wise credit assignment and post-step updates. IGRPO~\citep{zhang2026igrpo} instead expands trajectory trees according to information gain and derives an induced teacher distribution for policy optimization. RAIL differs by learning the realized marginal gain of structured interventions that jointly vary branch budget and decoding behavior, while retaining the underlying group-relative policy objective.

Dynamic rollout allocation also interacts with \emph{policy optimization}. Existing approaches modify token-level advantages~\citep{fang2026allocate,dong2026agentic}, introduce step-level credit assignment~\citep{dong2025agentic,yang2025depth,han2026three}, filter low-signal trajectories~\citep{wang2026ragen}, or construct tree-derived rewards and policy targets~\citep{ji2025tree,zhao2026training,zhang2026igrpo}. These methods often modify rollout collection and the optimization rule jointly. In contrast, RAIL isolates intervention learning at the rollout-generation layer: it retains the critic-free group-relative objective while learning where and how to reshape the rollout distribution from realized intervention outcomes.

\section{Implementation Details and Additional Experiments}

\subsection{Benchmarks}
\label{app:benchmark}

\noindent\textbf{AgentBench-OS}~\citep{liu2024agentbench} is an operating-system interaction benchmark from AgentBench. The agent receives a natural-language instruction and completes the task by issuing shell commands in an interactive environment. Unlike single-turn reasoning benchmarks, the outcome depends on a sequence of executable decisions, and early commands may alter the environment state in ways that make later recovery easier or harder. This benchmark is therefore suitable for evaluating whether RAIL can identify trajectory states where additional rollout intervention exposes more useful recovery paths. In our paper, We train on 600 bash tasks and evaluate on the full test held-out test tasks that carry a precomputed ground-truth answer. Each episode runs in its own pooled \texttt{ubuntu:20.04} container (reset between tasks), and every command is executed under a $5$-second hard timeout with truncated output. A task is judged successful either by running its AgentBench checker scripts in the container (all must exit $0$) or, when a reference answer is available, by a case-insensitive match with integer-value and bounded-substring tolerance.

\noindent\textbf{AgentBench-DB}~\citep{liu2024agentbench} is a database interaction benchmark from AgentBench. The agent must reason over structured database states and execute operations that satisfy the task instruction. We report both overall success rate and operation-level results over \textsc{Select}, \textsc{Insert}, and \textsc{Update} tasks. This benchmark tests rollout intervention in a structured tool-use setting, where failures may arise from incorrect query construction, wrong operation choice, or insufficient use of intermediate database feedback. In this paper, we train on 1000 tasks and evaluate on the $300$-task test split, which is balanced across operation types ($100$ \textsc{Insert}, $100$ \textsc{Update}, and $100$ \textsc{Select}-family tasks). The agent uses two tools, \texttt{sql\_query(query)} and \texttt{answer\_action(answer)}, against a fresh \texttt{MySQL~8} instance whose tables are (re)created and populated per episode. \textsc{Select} tasks are graded by a format-tolerant comparison of the returned values (numeric canonicalisation and order-independent set equality); write tasks are graded by \emph{database state}: after the model's writes, the resulting table state is hashed and compared against the state produced by executing the reference SQL on a fresh copy, so any SQL with the correct effect scores as correct regardless of surface form.

\noindent\textbf{WebShop}~\citep{yao2022webshop} is a web-based shopping benchmark in which an agent must satisfy a user request by navigating product search pages and selecting an appropriate item. The agent interacts with the environment through search, page navigation, item inspection, and purchase decisions. We use WebShop to evaluate RAIL under web-navigation rollouts, where the agent must explore multiple candidate products and recover from misleading or incomplete search trajectories. We report overall success rate as well as difficulty-stratified results over L1 and L2 tasks following the difficulty setting from \cite{ma2024agentboard}.
In this paper, we use the 600-product WebShop index, which exposes $6910$ human-instruction goals; goals are partitioned into disjoint index ranges, and we use the full held-out test set as evaluation. The agent has three tools, \texttt{search\_action(query)}, \texttt{click\_action(button)}, and \texttt{answer\_action(answer)}, over at most $10$ rounds, and a purchase is realised by clicking \texttt{Buy Now}, which terminates the episode. 

\noindent\textbf{ToolQA-Coffee}~\citep{zhuang2023toolqa} is a tool-mediated question answering benchmark over coffee-domain tabular data. In our configuration, the agent uses a Python interpreter to inspect tables, perform lookup and aggregation, and answer questions that require data analysis rather than memorized knowledge. We evaluate both Easy and Hard splits to measure whether rollout intervention remains useful as the required tool-use and reasoning complexity increases.
In our paper, all questions are templated over a single coffee price time series (daily open/high/low/close/volume records). We use a merged split of 600 training for Coffee-hard and 200 test questions. By the official setting, easy questions are single-day lookups and derivations (opening/closing price, percentage change, daily range, bullish/bearish), while hard questions are multi-day aggregations over a date window (extremal price, price range, date of the largest day-over-day move). The agent has two tools, \texttt{python\_interpreter(code)} and \texttt{answer\_action(answer)}, over at most $6$ rounds; the interpreter runs in a sandboxed worker with a restricted import set and a per-call fresh copy of the data frame. Answers are scored by ToolQA's normalised exact match, augmented with a relative/absolute numeric tolerance so that equivalent numeric formattings (e.g.\ \texttt{147} vs.\ \texttt{147.0}) are accepted.

\begin{table}[t]
\centering
\small
\label{tab:cross-dimension}
\resizebox{0.9\textwidth}{!}{
  \begin{tabular}{@{}cccccc@{}}
  \toprule
  \textbf{Method} & \textbf{Granularity} & \textbf{Calibrated?} & \textbf{Learns When?} & \textbf{Adaptive Strategy?} & \textbf{Strategies} \\
  
  \midrule\midrule
  
    \textbf{GRPO}  & Task-level  & \xmark & \xmark &
  \xmark & None\\
  \midrule

   ARPO & Step-level & \xmark & \xmark &
  \xmark & Branch Count \\
   VIP & Task-level & \cmark & \cmark &
  \xmark & Branch Count\\
    AEPO & Step-level & \xmark & \xmark &
  \xmark & Branch Count \\
   Tree-GRPO & Step-level & \xmark & \xmark &
  \xmark & None\\
  TAMPO & Task-level & \xmark & \xmark &
  \cmark & Temperature\\
  
  \midrule

  \rowcolor{blue!5} \textbf{RAIL}          & Step-level & \cmark & \cmark &
  \cmark & Multi-Strategy \\

  \midrule
  \bottomrule
  \end{tabular}}
    \vspace{-5pt}
  \caption{Cross-dimensional comparison of rollout intervention baselines.}
\end{table}

\subsection{Baselines}
\label{app:baselines}
\noindent\textbf{GRPO}~\citep{shao2024deepseekmath} serves as the base policy optimization framework without adaptive intervention. It performs group-based updates by normalizing rewards across a fixed number of sampled rollouts, implicitly assuming all rollouts contribute equally regardless of their variability or informativeness.

\noindent\textbf{Step-level Branching. }Step-level branching methods, including ARPO~\citep{dong2025agentic} and AEPO~\citep{dong2026agentic}, spawn additional rollouts from intermediate states whose token entropy is high. ARPO tracks the entropy change relative to an initial anchor and branches once it exceeds a threshold, while AEPO splits the global and branch budget through a sigmoid of the question-versus-tool entropy gap and penalizes lineages that branch repeatedly. However, the branch trigger is a fixed function of an output-distribution uncertainty signal, which is decoupled from the environment recoverability that determines whether extra rollouts can restore a useful learning signal.

\noindent\textbf{Task-level Predictive Budget Allocation. }These methods adapt a per-task scalar from a predicted or observed reward signal. VIP~\citep{nguyen2026adaptive} relates a group's gradient variance to a Bernoulli success model and solves a budget-constrained program that assigns more rollouts to the tasks predicted to be most variance-rich, whereas TAMPO~\citep{dang2026temperature} treats the decoding temperature as a reward-driven meta-policy, sampling hotter or cooler decoding where higher reward has been observed. However, both commit to a single scalar per task, how many rollouts, or how exploratory the decoding, and cannot express which kind of intervention a specific mid-trajectory state requires.

\noindent\textbf{Tree-structured Rollouts. }Tree-GRPO~\citep{ji2025tree} collects rollouts as prefix-sharing trees rather than independent trajectories, computing an intra-tree advantage from the post-branch suffixes and an inter-tree advantage across trees, so that more rollouts fit a fixed budget and a step-level signal is derived from the shared prefixes. However, the tree structure is fixed a priori rather than chosen per state, so compute is not redirected toward the specific decisions whose additional rollouts would most improve informativeness.

\subsection{Framework Details}
\label{app:rail_details}

\begin{algorithm}[t]
\caption{\textsc{RAIL}: Recoverability-Aware Intervention Learning}
\label{alg:rail}
\small
\begin{algorithmic}[1]

\Require Policy $\pi_{\theta}$, structured intervention space
$\mathcal{B}\!=\!\mathcal{M}\times\mathcal{T}$, recoverability controller
$\widehat{\Delta}_{\phi}$, utility threshold $\eta$
\Ensure Updated policy $\pi_{\theta}$ and controller
$\widehat{\Delta}_{\phi}$

\State Initialize intervention buffer $\mathcal{D}\gets\varnothing$
\State Initialize mode $q\gets\textsc{Shadow}$

\For{each policy-optimization step $t$}

    \State Generate an initial rollout group $Y_t$ using $\pi_{\theta_t}$
    \State Identify candidate anchors $\mathcal{A}_t$ and construct states $z$

    \If{$q=\textsc{Shadow}$}

        \ShadowHeader{Shadow Mode: Heuristic Trace Collection}

        \ForAll{$z\in\mathcal{A}_t$}
            \ForAll{$b\in\mathcal{B}$ in iterative order}

                \State Execute intervention $b$ and observe
                $
                    y
                    =
                    I(Y_{\mathrm{after}})
                    -
                    I(Y_{\mathrm{before}})
                    -
                    \lambda C(b)
                $
                \State $\mathcal{D}\gets\mathcal{D}\cup\{(z,b,y)\}$
                \State Update $\widehat{\Delta}_{\phi}$ from recent traces in $\mathcal{D}$

                \If{$b$ improves the rollout-group signal}
                    \State Retain the generated continuations
                \Else
                    \State \textbf{break}
                    \Comment{accept-or-stop}
                \EndIf

            \EndFor
        \EndFor

        \If{the promotion criterion (fixed or dynamic) is satisfied}
            \State $q\gets\textsc{Live}$
        \EndIf

    \Else

        \LiveHeader{Live Mode: Learned Utility-Gated Intervention}

        \ForAll{$z\in\mathcal{A}_t$}

            \State Select $b_t^\star
                \in
                \arg\max_{b\in\mathcal{B}}
                \widehat{\Delta}_{\phi_t}(z,b)
            $

            \If{$\widehat{\Delta}_{\phi_t}(z,b_t^\star)>\eta$}
                \State Execute $b_t^\star$
                \Comment{with exploration during training}
                \State Observe $y_t$ and retain the generated continuations
                \State $\mathcal{D}\gets
                \mathcal{D}\cup\{(z,b_t^\star,y_t)\}$
                \State Update $\widehat{\Delta}_{\phi}$ from recent traces
            \Else
                \State Continue the default rollout process
            \EndIf

        \EndFor

    \EndIf

    \State Compute rewards and relative advantages from $Y_t$
    \State Update $\pi_{\theta_t}\rightarrow\pi_{\theta_{t+1}}$
    using the standard critic-free objective

\EndFor

\State \Return $\pi_{\theta}$ and $\widehat{\Delta}_{\phi}$

\end{algorithmic}
\end{algorithm}

This appendix instantiates the abstract controller interface used in Section~\ref{sec:method}. The main method requires the controller to compare interventions through $\widehat{\Delta}_{\phi}(z,b)$; therefore, implementation must specify how actions $b$ are encoded, what pre-intervention information forms the state $z$, how realized gains $y$ are computed, and how the live utility gate is applied. In this section we describe RAIL algorithm in details. 

\subsubsection{Structured Intervention Actions}
RAIL uses a compact executable intervention space that varies both rollout quantity and rollout behavior:
\[
    \mathcal{B}
    =
    \left(\mathcal{M}\times\mathcal{T}\right),
    \qquad
    \mathcal{M}=\{4,8,12\},
    \qquad
    \mathcal{T}=\{\textsc{Exploit},\textsc{Mild},\textsc{Aggressive}\}.
\]
Here, each $m\in\mathcal{M}$ denotes the number of additional continuations spawned at an anchor, and each $\tau\in\mathcal{T}$ specifies a decoding regime. For AgentBench-OS and AgentBench-DB, the regimes are
\[
\textsc{Exploit}=(T_{\mathrm{base}},0.90),\quad
\textsc{Mild}=(1.3,0.98),\quad
\textsc{Aggressive}=(1.6,1.0),
\]
where each pair denotes temperature and top-$p$, and $T_{\mathrm{base}}$ inherits the baseline sampling temperature. For WebShop and ToolQA-Coffee, we use more exploratory regimes, with \textsc{Mild}=$(1.8,0.99)$ and \textsc{Aggressive}=$(2.5,1.0)$, because these environments require broader search over web or table-operation trajectories.

This $3\times3$ space gives nine executable branch actions. It is intentionally small enough for online partial-feedback learning, but not tied to these exact values. \textbf{The same interface can be extended} by adding another scalar choice, such as prompting style, verifier-guided repair, or rollback actions; or by adding option choices in one scalar, such as what we did in ablation study, where we vary the granularity of the discrete grid: a coarser $2\times2$ space (branch counts ${4,12}$, regimes ${\textsc{exploit},\textsc{aggressive}}$) and a finer $4\times4$ space that introduces an additional branch budget ($m=16$) and a fourth intermediate temperature regime. In the controller input, an action is encoded by a four-dimensional vector: one normalized branch-count scalar and a three-dimensional one-hot code for the
decoding regime. For the default \(3\times3\) action space (branch counts 4, 8, 12), the branch count is
normalized as \(m/12\). For action-granularity ablations with different maximum
branch budgets, we normalize \(m\) by the maximum branch budget in the
corresponding action grid.
The scalar budget encoding lets nearby branch counts share statistical strength, while the one-hot regime encoding avoids imposing an artificial ordering over qualitatively different sampling behaviors.

\subsubsection{State Representation}
The controller must predict intervention gain before the branch outcome is observed, so $z_t$ is restricted to leakage-safe pre-intervention information. We use a 26-dimensional state vector:
\[
    z_t
    =
    \left[
    \underbrace{s_t}_{10\ \mathrm{scalars}}
    \,\Vert\,
    \underbrace{P_h h_t}_{8\ \mathrm{dims}}
    \,\Vert\,
    \underbrace{P_e e_x}_{8\ \mathrm{dims}}
    \right],
    \qquad
    \dim(z_t)=26.
\]
The first block $s_t$ contains ten rollout statistics. They are chosen to capture four aspects that determine recoverability from empirical statistical analysis: local uncertainty, trajectory progress, current reward-distribution shape, and diversity of the existing rollout pool. The exact features are listed in Table~\ref{tab:rail_state_features}. All statistics are computed from the rollout pool before the attempted branch, together with anchor metadata and budget counts; the after-branch outcome is never used as an input feature.

\begin{table}[t]
\centering
\small
\begin{tabular}{l p{0.63\linewidth}}
\toprule
\textbf{Feature} & \textbf{Role in recoverability prediction} \\
\midrule
Anchor entropy & Measures local decision uncertainty; high entropy indicates that the policy has not collapsed to a single continuation mode. \\
Anchor round & Locates the intervention point in the trajectory; early mistakes often create larger downstream recovery effects. \\
Anchor round fraction & Normalizes position by the maximum number of rounds, making trajectory progress comparable across tasks. \\
Budget used fraction & Indicates remaining rollout capacity, which affects whether further branching is worth its cost. \\
Before-branch reward mean & Captures current task solvability under the existing pool. \\
Before-branch reward standard deviation & Measures existing reward contrast before intervention. \\
Distance to $0.5$ & Estimates the remaining gain ceiling; pools near all-success or all-failure have less relative-reward contrast. \\
Failed-anchor count & Tracks repeated unsuccessful intervention attempts, signaling potentially low recoverability for the current task. \\
Average pool entropy & Measures uncertainty across the rollout pool, not only at the selected anchor. \\
Tool-sequence diversity & Measures behavioral diversity by counting distinct tool-call sequences relative to pool size. \\
\bottomrule
\end{tabular}
\caption{Pre-intervention scalar features used by the RAIL controller.}
\label{tab:rail_state_features}
\end{table}

The last two scalar features, average pool entropy and tool-sequence diversity, were included after offline feature selection: average pool entropy produced the largest act/skip-AUC improvement, and tool-sequence diversity gave an additional gain. A tested visitation-uncertainty feature was removed because its signal largely reflected recurrence artifacts rather than recoverability. The projected blocks provide lightweight context beyond hand-designed statistics: $h_t$ is the policy hidden state at the anchor and $e_x$ is the task embedding. Both are projected to eight dimensions, which keeps the controller cheap and the projection
matrices \(P_h\) and \(P_e\) are trained with the controller, while the policy
hidden states are detached so that controller training does not update the
underlying policy model.

\subsubsection{Controller Architecture and Training}
Given the 26-dimensional state and 4-dimensional action code, the controller predicts a scalar gain from a 30-dimensional state-action input. We use a single-output neural network rather than a multi-head action classifier, because each trace observes the outcome of only one selected action. This design matches the contextual-bandit feedback structure: the controller learns a utility function over $(z,b)$ pairs and generalizes across nearby states and actions. The default network uses a 64-dimensional hidden layer with GELU activation, dropout, and a skip connection from the raw state-action input to the output layer. \textbf{The controller runs separately from the policy model; hidden states are detached before entering the controller, so controller gradients never update the language model.}

Each executed branch produces a trace $(z_t,b_t,y_t)$. The controller is trained on a replay buffer with recency-weighted Huber regression:
\[
    \mathcal{L}(\phi)
    =
    \frac{
    \sum_j w_j
    \ell_{\kappa}
    \left(
        \widehat{\Delta}_{\phi}(z_j,b_j)-y_j
    \right)}
    {\sum_j w_j}.
\]
We use Huber loss because finite rollout groups produce noisy gain labels, and we use recency weighting because recoverability changes as the policy evolves. In implementation, the hard window in \eqref{eq:rail_predictor_obj} is realized as an exponential half-life over recent traces.

\textbf{Recoverability label and intervention cost.}
Our benchmarks are all trained with binary reward, despite benchmarks such as WebShop provide dense reward options. The implementation computes recoverability gain through the reduction in distance to the balanced reward regime:
\[
    y_t
    =
    \left[
        d(Y_{\mathrm{before}})
        -
        d(Y_{\mathrm{after}})
    \right]
    -
    \lambda C(b_t),
    \qquad
    d(Y)=|\bar R_Y-0.5|.
\]
This label is monotone with the reward-contrast objective in the binary case: moving the rollout pool toward $\bar R_Y=0.5$ increases the variance term $p(1-p)$ and creates stronger relative-reward signal. In practice, a cost penalty is included in all reported runs with $\lambda=0.005$. It prevents the controller from learning a trivial preference for larger branch budgets and makes the predicted gain correspond to utility after accounting for rollout cost.

\subsubsection{Shadow-mode Execution}
The main text describes shadow mode conceptually; here we specify its execution rule. RAIL begins from an initial pool of four rollouts and detects candidate anchors using normalized token entropy. At each anchor, the warm-up policy tries branch budgets in increasing order, $4\rightarrow8\rightarrow12$, clamped by the remaining rollout budget (Rollout budget is the maximum budget allowed for branching and allocation.) A branch is retained if the pooled reward distribution moves closer to the balanced regime; otherwise, the sweep at that anchor stops. Each accepted or rejected trial is logged as a separate trace, so one anchor can provide multiple supervised outcomes for the controller. The decoding regime is chosen from the current pool mean: easier pools use \textsc{Exploit}, intermediate pools use \textsc{Mild}, and harder or unresolved pools use \textsc{Aggressive}; when adaptive decoding is enabled, this regime is recomputed before each budget increment.

\begin{wrapfigure}{t}{0.6\textwidth}

\centering
\includegraphics[width=\linewidth]{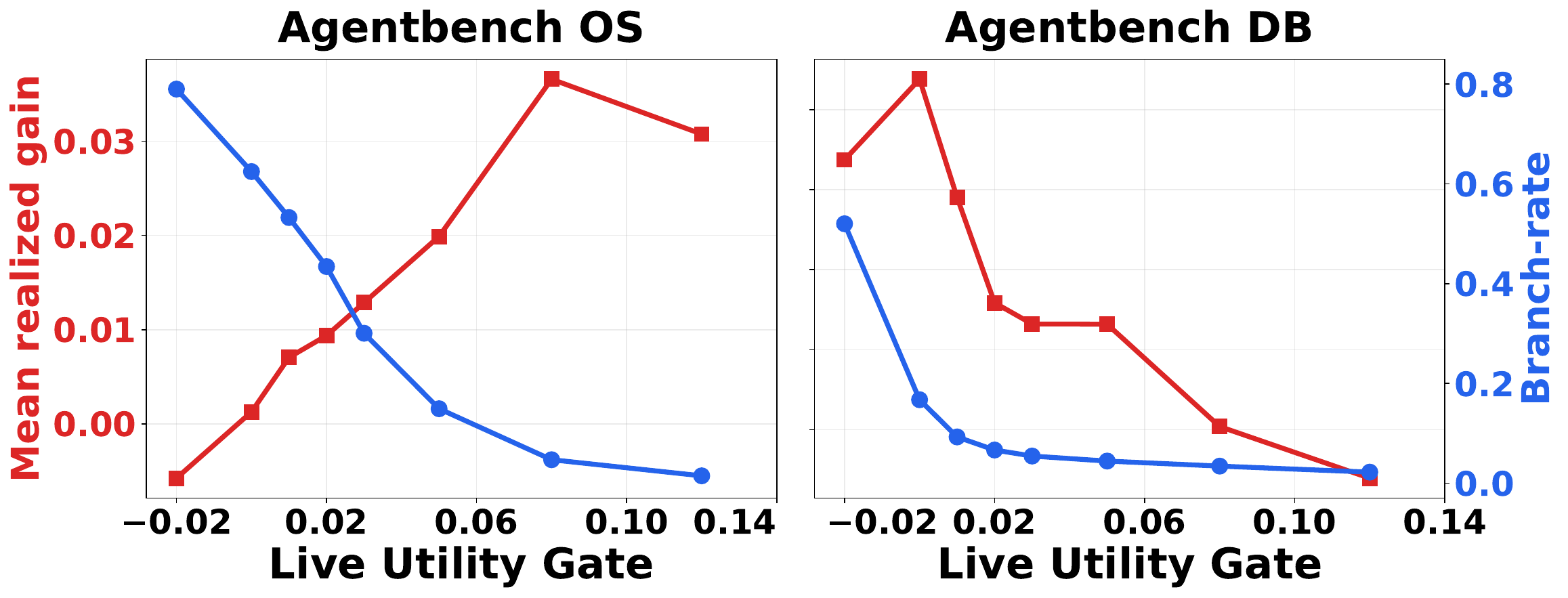}
\vspace{-15pt}
\caption{Live Utility Gate Trade-off against realized gain on AgentBench-OS (Left) and AgentBench-DB (Right). }
\label{fig:tau}
\vspace{-10pt}
\end{wrapfigure}

\subsubsection{Shadow-To-Live Promotion}
The shadow phase is only needed to bootstrap the recoverability controller before it is trusted to control live rollout allocation. In practice, we choose a fixed proportion of the total steps to serve as the shadow length for simplicity. We demonstrate how the change affects the training in Table~\ref{tab:ablation} and discussed its trade-offs in the experiment sections. However, a fixed gate is nevertheless a coarse choice: if too short, the controller may enter live mode before its predictions are decision-relevant; if too long, training continues to spend rollout budget on heuristic trace collection after the controller is already useful. \textbf{Since controller reliability depends on both the benchmark and the evolving policy, we also evaluate a dynamic promotion rule that switches from shadow to live mode once the controller demonstrates sustained out-of-sample predictive skill.}

We measure this skill prequentially: before each controller update, we record the prediction made for the executed intervention and compare it with the realized gain after the outcome is observed. This predict-before-update protocol prevents the promotion signal from being inflated by fitting the same trace. We use sign agreement rather than regression error,
\[
    a_t
    =
    \mathbf{1}
    \left[
    \operatorname{sign}
    \left(
    \widehat{\Delta}_{\phi_t}(z_t,b_t)
    \right)
    =
    \operatorname{sign}(y_t)
    \right],
\]
because live deployment primarily depends on whether an intervention is predicted to improve over the no-op baseline. In contrast, absolute regression error can be misleading when realized gains are concentrated near small negative values: a nearly constant predictor may achieve low MAE while providing little useful act/skip information.

Let $\bar a_t$ denote the mean sign agreement over the most recent $W$ intervened anchors. The dynamic rule promotes the controller from shadow to live mode when three conditions hold simultaneously: (i) at least $n_{\min}$ anchors have been observed; (ii) all nine executable branch actions have been observed at least once (During shadow mode, a small coverage component occasionally samples less-used ($m,\tau$) cells so that all executable actions can be observed before live
promotion); and (iii) $\bar a_t$ exceeds a chance-adjusted threshold for $P$ consecutive anchors. Specifically, we use
\[
    \beta_{\mathrm{bar}}
    =
    0.5
    +
    2\sqrt{\frac{0.25}{W}},
\]
which is two binomial standard deviations above chance agreement under a balanced-sign null model. Promotion is latched once triggered. As a safety fallback, if the dynamic gate has not fired by a maximum step cap $T_{\mathrm{cap}}$, the controller is promoted by the cap; conversely, if the controller never demonstrates stable predictive skill before the end of training, it remains in shadow mode rather than deploying an unreliable predictor.

In the dynamic-promotion ablation, we set the sign-agreement window to $W=60$ intervened anchors, the consecutive-success requirement to $P=60$ anchors, the minimum number of observed anchors to $n_{\min}=50$, and the safety step cap to $T_{\mathrm{cap}}=200$ rollout steps, which gives a chance-adjusted promotion threshold of $\beta_{\mathrm{bar}}\approx0.629$. Here $W$ is the trailing window over which the mean sign agreement $\bar a_t$ is computed, $P$ is how many consecutive anchors $\bar a_t$ must remain above $\beta_{\mathrm{bar}}$ before promotion, $n_{\min}$ is the warm-up floor of anchors that must be seen first, and $T_{\mathrm{cap}}$ is the fallback step at which the controller is promoted regardless. Under this rule, the controller promotes by demonstrated skill rather than by the cap: on AgentBench-OS it enters live mode at step $101$ with rolling sign agreement $0.650$, and on AgentBench-DB at step $165$ with rolling sign agreement $0.783$—both well before $T_{\mathrm{cap}}$. Thus, dynamic promotion adapts the shadow length to the controller's observed reliability, reaching live intervention earlier when the controller is ready while avoiding premature deployment when its predictions are not yet stable.

\begin{figure}[t]
	\centering
    \vspace{-10pt}
	\includegraphics[width=1\linewidth]{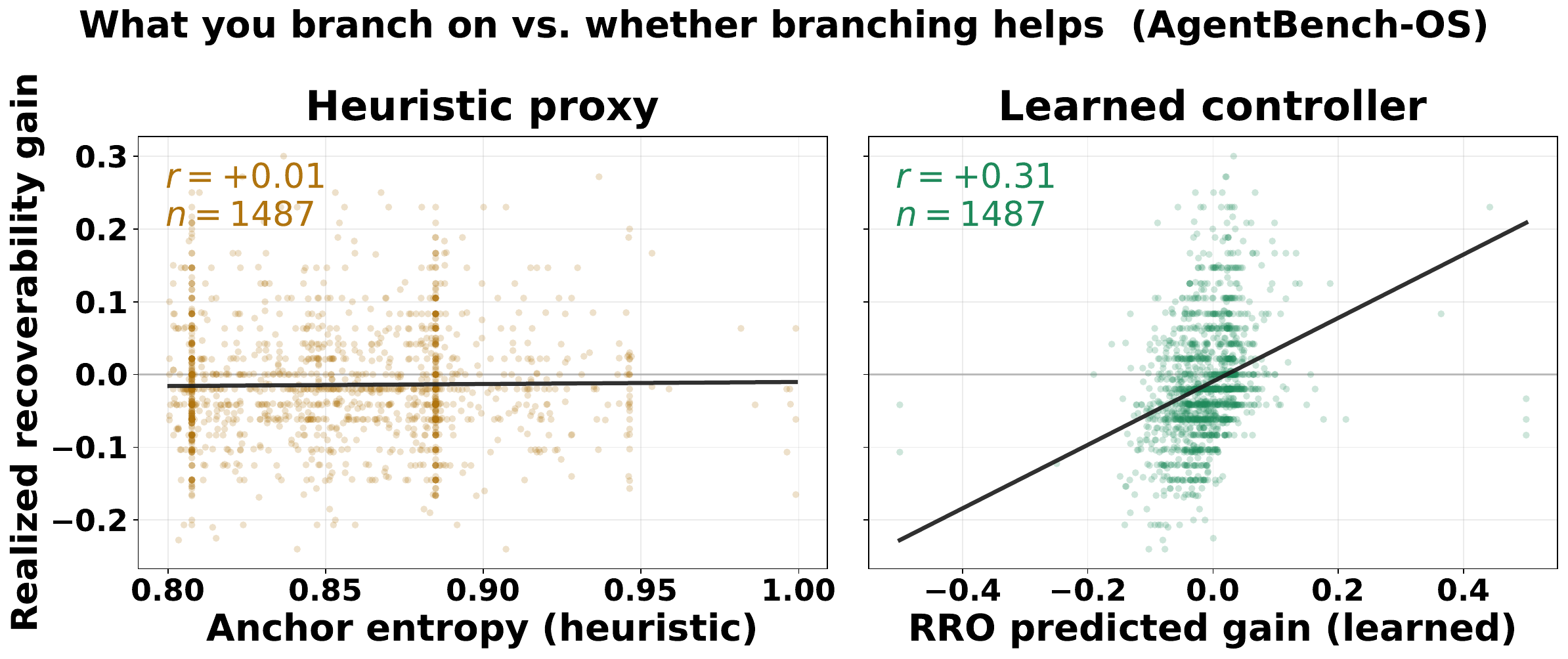}
        \vspace{-10pt}
	\caption{Recoverability is not readable from entropy, but it is learnable.
    }

    \label{fig:revocer}
\end{figure}

\subsubsection{Live Utility Gate}
In live mode, the controller scores the nine executable branch actions and compares the best predicted branch against the no-op baseline. The no-op has fixed gain zero, so the live decision rule is
\[
    b_t^\star
    =
    \arg\max_{b\in\mathcal{B}\setminus\{\varnothing\}}
    \widehat{\Delta}_{\phi_t}(z_t,b),
    \qquad
    b_t
    =
    \begin{cases}
    b_t^\star, &
    \widehat{\Delta}_{\phi_t}(z_t,b_t^\star)>\eta_{\mathrm{gain}},\\
    \varnothing, &
    \text{otherwise}.
    \end{cases}
\]
Thus, $\eta_{\mathrm{gain}}$ is a margin over the no-op baseline: larger values make RAIL more selective, while smaller values increase branching frequency. We use $\eta_{\mathrm{gain}}=0.02$ for our experiments. In practice we recommend tuning the threshold within a small range because the reward scale and anchor frequency differ across these environments. Here we provide another ablation study on how the change of $\eta_{\mathrm{gain}}$ affect overall recoverability gain in Figure~\ref{fig:tau}. Unlike shadow mode, live mode executes a single selected branch action rather than running an accept-or-stop sweep. 

\textbf{It's worth noting that Recoverability is not readable from entropy, but it is learnable.} Figure~\ref{fig:revocer} probes why rollout intervention must be learned rather than triggered by a fixed uncertainty signal. Both panels are computed on the same set of live-phase decision states logged during RAIL training on AgentBench-OS ($n=1487$ anchors), and both plot, on the vertical axis, the realized recoverability gain — the change in finite-group reward contrast actually observed after the intervention. The left panel places on the horizontal axis the per-anchor decision entropy, the local-uncertainty signal on which step-level branchers such as ARPO and AEPO base their intervention decisions; it is essentially uncorrelated with realized gain (Pearson $r=+0.01$), indicating that knowing a state is high-entropy tells one almost nothing about whether intervening there will expose additional learning signal. The right panel places on the horizontal axis the gain predicted by RAIL's recoverability controller, which is trained online from realized outcomes; its predictions track the realized gain far more closely ($r=+0.31$). 

%\subsubsection{Exploration and Promotion.}
\subsubsection{Live Exploration} 
To maintain coverage of the finite intervention space and guarantee sufficient exploration, live training uses a small exploration probability over executable branch cells. Exploration is directed toward under-covered cells rather than sampled uniformly, which improves coverage of the $3\times3$ action space under limited live-mode budget. The exploration rate decays over training. We discuss how RAIL enforces sufficient exploration in Appendix~\ref{app:regret}.

\subsubsection{Time complexity Analysis} 
While wall-clock time is difficult to compare fairly across implementations because it depends on hardware, serving configuration, batching, parallelism, environment latency, and system load, we instead analyze computational complexity. RAIL leaves the asymptotic cost of the training loop unchanged relative to GRPO. The dominant term for both is language-model rollout generation and environment interaction, which scales as $O(G \cdot L \cdot E)$ for $G$ rollouts of $L$ generated tokens over $E$ environment steps; every other component is lower-order. RAIL's only per-step addition is the recoverability controller, a small side-predictor evaluated at candidate anchor states. Scoring an anchor is a single forward pass of a two-layer MLP ($\sim$2{,}000 parameters) over the $|\mathcal{B}|$ executable interventions, which is negligible compared with LLM rollout generation. Consequently, the controller contributes no measurable overhead: RAIL's compute is governed by how many rollouts it generates, not by the intervention machinery. In summary, since the utility gate suppresses branching at states with low predicted recoverability, RAIL in fact issues fewer rollouts per task on average than uniform GRPO-32 (Table~\ref{tab:efficiency}), and it's live time branching cost is comparable to other methods such as ARPO or AEPO despite the added controller.

\section{Theory and Proof}
\subsection{Reward Variance and the Midpoint Recoverability Target}
\label{app:variance_signal}

The recoverability target in this paper is instantiated as an increase in reward variance, which is an indirect proxy for optimization signal quality. This appendix justifies why RAIL uses reward variance as the recoverability signal and why the midpoint expected reward, i.e., success probability $p=0.5$, is the natural target for informative rollout intervention. The argument is closely related to the variance analysis of VIP~\citep{nguyen2026adaptive}, which shows that in group-based RL with verifiable rewards, the prompt-dependent factor in the projected-gradient variance is governed by the binary reward-variance term $p(1-p)$. We adapt this insight from prompt-level rollout allocation to our state- and intervention-level setting: rather than allocating more rollouts to prompts with high predicted variance, RAIL asks whether an intervention increases the reward-distribution contrast available at a task or trajectory state.

\textbf{Setup.}
Fix a rollout state $z$ and an intervention $b\in\mathcal{B}$. Let $R_b(z)\in\{0,1\}$ denote the verifier reward of a rollout generated after applying intervention $b$. We define
\begin{equation}
\label{eq:reward_variance_setup}
    p_b(z)=\Pr(R_b(z)=1),
    \qquad
    \mu_b(z)=\mathbb{E}[R_b(z)],
    \qquad
    V_b(z)=\operatorname{Var}(R_b(z)).
\end{equation}
For a group of $n$ conditionally i.i.d. rollouts, let $R_1,\ldots,R_n\sim R_b(z)$ and $\bar R=\frac{1}{n}\sum_{i=1}^n R_i$. Group-based methods form reward-dependent relative advantages, for example
\begin{equation}
\label{eq:centered_loo_adv}
    A_i^{\mathrm{cent}} = R_i-\bar R,
    \qquad
    A_i^{\mathrm{loo}}
    =
    R_i-\frac{1}{n-1}\sum_{j\neq i}R_j .
\end{equation}
Thus, when all rewards in a rollout group are identical, the reward-dependent advantage signal degenerates. Nontrivial reward contrast arises only when the rollout group contains different reward outcomes.

\begin{lemma}[Reward variance controls relative advantage contrast]
\label{lem:advantage_contrast}
For conditionally i.i.d. rewards with variance $\sigma_R^2=\operatorname{Var}(R_b(z))$, the expected mean squared relative advantages satisfy
\begin{equation}
\label{eq:advantage_contrast_both}
    \mathbb{E}\!\left[
    \frac{1}{n}\sum_{i=1}^n
    \left(A_i^{\mathrm{cent}}\right)^2
    \right]
    =
    \frac{n-1}{n}\sigma_R^2,
    \qquad
    \mathbb{E}\!\left[
    \frac{1}{n}\sum_{i=1}^n
    \left(A_i^{\mathrm{loo}}\right)^2
    \right]
    =
    \frac{n}{n-1}\sigma_R^2 .
\end{equation}
\end{lemma}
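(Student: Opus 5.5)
Both identities follow from writing each advantage as a fixed linear combination of the rewards and using only that the $R_i$ are conditionally i.i.d.\ with common mean $\mu=\mu_b(z)$ and variance $\sigma_R^2$. We begin by centering: set $\varepsilon_i=R_i-\mu$, so the $\varepsilon_i$ are independent, mean zero, with $\mathbb{E}[\varepsilon_i^2]=\sigma_R^2$. Both advantages in \eqref{eq:centered_loo_adv} are shift-invariant, since each subtracts an average of rewards with total coefficient one. Hence $A_i^{\mathrm{cent}}=\varepsilon_i-\bar\varepsilon$ and $A_i^{\mathrm{loo}}=\varepsilon_i-\frac{1}{n-1}\sum_{j\ne i}\varepsilon_j$. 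From here on only second moments of independent zero-mean terms are needed. Binary rewards are not required; for $\{0,1\}$ rewards one simply has $\sigma_R^2=p_b(z)(1-p_b(z))$.

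\textbf{Centered case.} Use the standard sum-of-squares identity $\sum_i(\varepsilon_i-\bar\varepsilon)^2=\sum_i\varepsilon_i^2-n\bar\varepsilon^2$. Taking expectations gives $\mathbb{E}\sum_i\varepsilon_i^2=n\sigma_R^2$, and by independence $\mathbb{E}[\bar\varepsilon^2]=\operatorname{Var}(\bar\varepsilon)=\sigma_R^2/n$. So the expected sum is $(n-1)\sigma_R^2$, and dividing by $n$ yields $\frac{n-1}{n}\sigma_R^2$. This is the usual bias of the plug-in variance estimator.

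\textbf{Leave-one-out case.} For each $i$, the term $\varepsilon_i$ is independent of $\frac{1}{n-1}\sum_{j\ne i}\varepsilon_j$, and both have mean zero. The cross term therefore vanishes and
\[
\mathbb{E}\big[(A_i^{\mathrm{loo}})^2\big]=\sigma_R^2+\frac{(n-1)\sigma_R^2}{(n-1)^2}=\sigma_R^2+\frac{\sigma_R^2}{n-1}=\frac{n}{n-1}\sigma_R^2 .
\]
This value is the same for every $i$, so averaging over $i$ gives the stated result. One can also check consistency with the identity $A_i^{\mathrm{loo}}=\frac{n}{n-1}A_i^{\mathrm{cent}}$. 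It follows from $\sum_{j\ne i}R_j=n\bar R-R_i$, and then $\left(\frac{n}{n-1}\right)^2\cdot\frac{n-1}{n}=\frac{n}{n-1}$.

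There is no genuine obstacle here. The only points requiring care are, first, justifying the shift to centered variables, which is what makes the cross terms vanish cleanly. Second, the leave-one-out mean must be taken over the other $n-1$ samples and not the full group; this requires $n\ge 2$, an assumption I will state explicitly.
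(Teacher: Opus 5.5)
Your proof is correct and is essentially the paper's. The centered case is the same sum-of-squares computation; the paper keeps $\mu$ explicit via $\mathbb{E}[R_i^2]=\sigma_R^2+\mu^2$ and $\mathbb{E}[\bar R^2]=\sigma_R^2/n+\mu^2$ instead of first shifting to $\varepsilon_i$, which is equivalent. For the leave-one-out case, the paper's entire argument is the rescaling $A_i^{\mathrm{loo}}=\frac{n}{n-1}(R_i-\bar R)$ that you give as a consistency check, whereas your main route computes $\mathbb{E}[(A_i^{\mathrm{loo}})^2]$ directly from independence; both are valid and need only $n\ge 2$, equal variances, and uncorrelated rewards.
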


\begin{proof}
For the centered estimator,
\begin{equation}
\label{eq:centered_proof_identity}
    \sum_{i=1}^n (R_i-\bar R)^2
    =
    \sum_{i=1}^n R_i^2 - n\bar R^2 .
\end{equation}
Let $\mu=\mathbb{E}[R_b(z)]$. Since
$\mathbb{E}[R_i^2]=\sigma_R^2+\mu^2$ and
$\mathbb{E}[\bar R^2]=\operatorname{Var}(\bar R)+\mu^2=\sigma_R^2/n+\mu^2$, we have
\begin{equation}
\label{eq:centered_proof_expectation}
\begin{aligned}
    \mathbb{E}\!\left[\sum_{i=1}^n (R_i-\bar R)^2\right]
    &=
    n(\sigma_R^2+\mu^2)
    -
    n\left(\frac{\sigma_R^2}{n}+\mu^2\right)  \\
    &=
    (n-1)\sigma_R^2 .
\end{aligned}
\end{equation}
Dividing by $n$ gives the first equality in~\eqref{eq:advantage_contrast_both}. For the leave-one-out estimator,
\begin{equation}
\label{eq:loo_centered_relation}
    A_i^{\mathrm{loo}}
    =
    R_i-\frac{n\bar R-R_i}{n-1}
    =
    \frac{n}{n-1}(R_i-\bar R).
\end{equation}
Substituting~\eqref{eq:loo_centered_relation} into the centered result gives the second equality in~\eqref{eq:advantage_contrast_both}.
\end{proof}

\textbf{Connection to VIP's projected-gradient analysis.}
Lemma~\ref{lem:advantage_contrast} gives a direct advantage-level justification for reward variance. VIP provides a complementary gradient-level justification: under standard conditional i.i.d. rollout assumptions and second-order decorrelation assumptions between rewards and projected score terms, the per-prompt projected-gradient variance of Dr.~GRPO and RLOO is proportional to the same reward-variance factor~\citep{nguyen2026adaptive}. In the notation of VIP, for binary rewards encoded as $\{-1,+1\}$ with success probability $p$, the reward-dependent term is $4p(1-p)$; for the $\{0,1\}$ encoding used here, the corresponding term is $p(1-p)$. The two encodings differ only by a constant factor and therefore have the same maximizer. This establishes that reward variance is not merely a heuristic measure of diversity: it is the reward-dependent contrast term that governs relative-advantage signal, and appears explicitly in the projected-gradient variance analysis of group-based RL.

\begin{proposition}[Midpoint maximizes binary reward contrast]
\label{prop:midpoint_reward_variance}
For verifier rewards $R_b(z)\in\{0,1\}$,
\begin{equation}
\label{eq:bernoulli_variance}
    V_b(z)
    =
    \operatorname{Var}(R_b(z))
    =
    p_b(z)(1-p_b(z)).
\end{equation}
This quantity is uniquely maximized at $p_b(z)=1/2$, equivalently $\mathbb{E}[R_b(z)]=0.5$.
\end{proposition}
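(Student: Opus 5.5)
The argument is a short computation followed by a one-variable concavity argument. Write $p=p_b(z)$ throughout.

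First I compute the variance of the Bernoulli reward directly. Since $R_b(z)\in\{0,1\}$, we have $R_b(z)^2=R_b(z)$. Hence $\mathbb{E}[R_b(z)^2]=\mathbb{E}[R_b(z)]=p$, and therefore
\begin{equation*}
V_b(z)=\mathbb{E}[R_b(z)^2]-\mathbb{E}[R_b(z)]^2=p-p^2=p(1-p).
\end{equation*}
This also identifies $\mu_b(z)=p$, which gives the stated equivalence between $p_b(z)=1/2$ and $\mathbb{E}[R_b(z)]=0.5$.

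Second, I maximize $f(p)=p(1-p)$ over $[0,1]$. The cleanest route is to complete the square:
\begin{equation*}
f(p)=\tfrac14-\left(p-\tfrac12\right)^2.
\end{equation*}
So $f(p)\le \tfrac14$, with equality if and only if $p=\tfrac12$. This gives both the maximum value $1/4$ and the uniqueness of the maximizer in one step, with no need for derivative or boundary checks.

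There is no real obstacle. The only point to state explicitly is uniqueness, and it follows from the strict positivity of $(p-1/2)^2$ whenever $p\neq 1/2$. As a remark connecting to the label used in RAIL, the same identity shows that $V_b(z)$ is a strictly decreasing function of $|p-1/2|$. Consequently, reducing the distance $d=|\bar R-0.5|$ is monotone with increasing reward contrast, which is the justification cited in Appendix~\ref{app:rail_details}.
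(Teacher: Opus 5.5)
Your proof is correct and matches the paper's argument: both compute $\operatorname{Var}(R_b(z))=\mathbb{E}[R_b(z)^2]-\mathbb{E}[R_b(z)]^2=p-p^2$ directly and then maximize the resulting quadratic. The only difference is in that last step, and it is cosmetic: you complete the square, $p(1-p)=\tfrac14-(p-\tfrac12)^2$, which gives the global maximum and its uniqueness in one line, whereas the paper uses the first derivative $1-2p$ and the second derivative $-2<0$ (strict concavity) to reach the same conclusion.
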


\begin{proof}
Since $R_b(z)$ is Bernoulli with success probability $p_b(z)$,
\begin{equation}
\label{eq:bernoulli_variance_derivation}
    \operatorname{Var}(R_b(z))
    =
    \mathbb{E}[R_b(z)^2]-\mathbb{E}[R_b(z)]^2
    =
    p_b(z)-p_b(z)^2
    =
    p_b(z)(1-p_b(z)).
\end{equation}
Moreover,
\begin{equation}
\label{eq:bernoulli_derivatives}
    \frac{d}{dp_b}p_b(1-p_b)=1-2p_b,
    \qquad
    \frac{d^2}{dp_b^2}p_b(1-p_b)=-2<0 .
\end{equation}
Thus the unique maximizer is $p_b=1/2$. The variance is zero at $p_b\in\{0,1\}$, corresponding to rollout groups that are either all failures or all successes.
\end{proof}

\textbf{Recoverability gain.}
Proposition~\ref{prop:midpoint_reward_variance} shows that binary reward contrast is maximized at the midpoint and vanishes at both saturated extremes. Therefore, rollouts with $p_b(z)\approx 0$ are effectively unrecoverable under intervention $b$, rollouts with $p_b(z)\approx 1$ are already solved, and rollouts near $p_b(z)=0.5$ provide the richest expected reward contrast. This motivates measuring recoverability by how much an intervention improves the reward contrast of the resulting rollout group.

In the population view above, this contrast is captured by the reward variance $V_b(z)=\operatorname{Var}(R_b(z))$. In the actual finite-group setting used by RAIL, we estimate the same signal through the empirical reward contrast
\begin{equation}
\label{eq:finite_group_contrast_app}
    I(Y)
    =
    \frac{1}{|Y|}
    \sum_{y_i\in Y}
    \left(R(y_i)-\bar R_Y\right)^2 .
\end{equation}
Thus, for a rollout state $z$ and intervention $b\in\mathcal{B}$, RAIL defines recoverability as the expected intervention-induced gain in finite-group reward contrast:
\begin{equation}
\label{eq:recoverability_gain_app}
    \Delta_\theta(z,b)
    =
    \mathbb{E}\!\left[
        I(Y_b)-I(Y_{\varnothing})
        \mid z,b,\pi_\theta
    \right],
\end{equation}
where $Y_b$ denotes the finite rollout group generated after applying intervention $b$, and $Y_{\varnothing}$ denotes the rollout group generated under default continuation. A positive gain $\Delta_\theta(z,b)>0$ indicates that the intervention exposes additional relative reward signal beyond default rollout generation. Conversely, $\Delta_\theta(z,b)\le 0$ indicates that the intervention does not improve the reward-distribution signal and should not be preferred unless compensated by another objective, such as exploration diversity or downstream verifier information.

\textbf{Takeaway.}
The midpoint target is not an arbitrary heuristic. For binary verifiable rewards, $p=0.5$ is the unique point at which the rollout distribution maximizes learning signals. Since group-based policy optimization relies on relative reward differences, these reward contrast provides a principled proxy for the useful reward-dependent signal available to the update. RAIL therefore instantiates recoverability as the intervention-induced improvement in finite-group reward contrast:
\[
    \Delta_\theta(z,b)
    =
    \mathbb{E}\!\left[
        I(Y_b)-I(Y_{\varnothing})
        \mid z,b,\pi_\theta
    \right].
\]
This quantity measures whether an intervention makes a task or trajectory state more informative for policy optimization, rather than merely increasing the number of generated rollouts.

\subsection{Recoverability Regret Analysis}
\label{app:regret}

This appendix formalizes the recoverability-regret guarantee for RAIL. Rather than proving learnability of the recoverability function from first principles \citep{}, our goal is to show that once recoverability can be tracked with sublinear cumulative error, the resulting intervention controller avoids the linear recoverability gap incurred by fixed or restricted heuristics. The proof uses a standard bandit decomposition: action selection regret can be controlled by the discrepancy between the estimated and true action values, together with an exploration term. This follows the finite-time regret perspective of classical finite-action bandits~\citep{auer2002finite} and the importance-weighted partial-feedback view of adversarial bandits~\citep{auer2002nonstochastic}. Because RAIL observes side information in the form of trajectory states, we use the oracle-style contextual-bandit perspective, where supervised or cost-sensitive prediction oracles are used to estimate action values under bandit feedback~\citep{langford2008epoch, agarwal2014taming}. Finally, because the recoverability function changes as $\pi_{\theta_t}$ evolves, we model this effect through a variation budget, following classical non-stationary bandit analyses based on switching or drifting reward distributions~\citep{garivier2011upper, besbes2014stochastic, besbes2015nonstationary}. VIP~\citep{nguyen2026adaptive} provides a closely related variance-informed rollout-allocation theory at the prompt level; RAIL instead studies online regret over a structured state-level intervention space.

\textbf{Setup.}
Let $\mathcal{B}$ be a finite intervention space with $K=|\mathcal{B}|$. At training step $t$, the current policy $\pi_{\theta_t}$ induces a recoverability-gain function
\begin{equation}
\label{eq:app_recoverability_function}
    f_t(z,b)
    =
    \Delta_{\theta_t}(z,b)
    =
    V_{\theta_t,b}(z)-V_{\theta_t,\varnothing}(z).
\end{equation}
Here $z\sim\mathcal{D}_t$ is the rollout state encountered during training and $b\in\mathcal{B}$ is an intervention. The controller selects $b_t$ and observes noisy bandit feedback
\begin{equation}
\label{eq:app_bandit_observation}
    y_t=f_t(z_t,b_t)+\xi_t,
    \qquad
    \mathbb{E}[\xi_t\mid z_t,b_t,\mathcal{H}_{t-1}]=0,
    \qquad
    |\xi_t|\le \sigma ,
\end{equation}
where $\mathcal{H}_{t-1}$ is the interaction history before step $t$. The cumulative recoverability regret is
\begin{equation}
\label{eq:app_recoverability_regret}
    \mathrm{Reg}_T
    =
    \sum_{t=1}^{T}
    \mathbb{E}_{z_t\sim\mathcal{D}_t}
    \left[
        \max_{b\in\mathcal{B}} f_t(z_t,b)
        -
        f_t(z_t,b_t)
    \right].
\end{equation}
This regret compares the selected intervention against the best intervention in the structured intervention space at each step.

\textbf{Assumptions.}
We use five assumptions. The first two are standard finite-action contextual-bandit conditions: bounded feedback and sufficient exploration. The next two isolate the main difficulties of our setting: learnability of recoverability and non-stationarity induced by policy optimization. The final assumption is an oracle-style tracking condition that combines these quantities into a cumulative estimation-error bound for the recoverability controller.

\begin{assumption}[Bounded finite-action feedback]
\label{assump:bounded_finite}
The intervention space $\mathcal{B}$ is finite. and $f_t(z,b)\in[0,1]$ for all $t,z,b$. The feedback noise is conditionally mean-zero and bounded as in~\eqref{eq:app_bandit_observation}.
\end{assumption}

\begin{assumption}[Sufficient exploration]
\label{assump:exploration}
The controller uses a greedy-with-exploration policy: with probability $1-\rho_t$ it selects the intervention maximizing the current estimate, and with probability $\rho_t$ it explores over $\mathcal{B}$. The exploration schedule satisfies
\begin{equation}
\label{eq:min_exploration}
    \Pr(b_t=b\mid z_t,\mathcal{H}_{t-1})\ge \rho_t/K,
    \qquad
    \forall b\in\mathcal{B},
    \qquad
    \sum_{t=1}^{T}\rho_t=\tilde O(\sqrt{TK}).
\end{equation}
This condition ensures nonzero coverage of every intervention. It is the same role played by explicit exploration in finite-action bandits~\citep{auer2002finite} and by importance-weighted exploration in partial-feedback bandit algorithms~\citep{auer2002nonstochastic}.
\end{assumption}

\textbf{How RAIL enforces sufficient exploration.}
Assumption~\ref{assump:exploration} is implemented at the level of the finite intervention space, not at the level of primitive language actions. During live training, RAIL uses a greedy-with-exploration policy over the nine executable branch actions in $\mathcal{B}$. With probability $1-\rho_t$, the controller selects the branch action with the largest predicted recoverability gain if over the live utility gate; with probability $\rho_t$, it executes an exploratory branch action. This ensures that intervention learning is not driven only by the controller's current argmax, which would otherwise create self-reinforcing blind spots in rarely selected intervention cells.

In implementation, exploration is coverage-directed rather than uniformly random. When the exploration draw fires, RAIL selects the least-covered branch cell among the $|\mathcal{M}\times\mathcal{T}|=9$ executable interventions, with ties broken deterministically. Thus every intervention cell receives explicit coverage during live training as the controller becomes informative. This exploration probability is decayed over training, so exploration supplies the coverage needed for partial-feedback learning without asymptotically dominating the rollout budget. This realizes the role of $\rho_t$ in Assumption~\ref{assump:exploration}: each branch action receives nonzero exploration support, and the cumulative amount of exploration remains controlled. In this sense, sufficient exploration is guaranteed for learning relative utilities among executable interventions, while the act/skip decision remains governed by the utility gate.

\begin{assumption}[Recoverability learnability]
\label{assump:approximation}
Let $\mathcal{F}$ be the controller class used to estimate recoverability gains. Define the cumulative best-in-class approximation error
\begin{equation}
\label{eq:approx_error}
    A_T
    =
    \sum_{t=1}^{T}
    \inf_{g\in\mathcal{F}}
    \mathbb{E}_{z\sim\mathcal{D}_t}
    \left[
        \max_{b\in\mathcal{B}}
        |g(z,b)-f_t(z,b)|
    \right].
\end{equation}
We assume $A_T=o(T)$ in regimes where intervention utility is learnable from outcome traces.
\end{assumption}
Note this assumption is following the standard practice. We further show that this assumption is also widely supported across tasks via the adaptability experiments we demonstrate in Figure~\ref{fig:adapt}.

\begin{assumption}[Controlled policy-induced drift]
\label{assump:drift}
The cumulative variation of the recoverability function is bounded by
\begin{equation}
\label{eq:drift_budget}
    D_T
    =
    \sum_{t=2}^{T}
    \sup_{z\in\mathcal{Z},\,b\in\mathcal{B}}
    |f_t(z,b)-f_{t-1}(z,b)|.
\end{equation}
We assume $D_T=o(T)$ under stable policy optimization.
\end{assumption}

Assumption~\ref{assump:drift} is the stability condition connecting the controller analysis to policy optimization. If policy updates changed recoverability arbitrarily by a constant amount at every step, no online controller could track intervention utility, and the post-training process itself would be unstable. Thus, sublinear drift is a standard tracking premise of policy optimization rather than an additional claim specific to RAIL. Similar variation-budget conditions are commonly used to formalize learnability under changing reward distributions in non-stationary bandits~\citep{besbes2014stochastic,besbes2015nonstationary}; switching-window analyses provide a related view when the environment changes in segments~\citep{garivier2011upper}.

This drift perspective also motivates the sliding-window and recency-weighted training. A trace collected at time $j<t$ provides information about $f_j$, whereas the controller at time $t$ must predict $f_t$. The mismatch between these two recoverability functions is controlled by the accumulated drift:
\begin{equation}
\label{eq:drift_trace_bound}
    \sup_{z,b}|f_t(z,b)-f_j(z,b)|
    \le
    \sum_{s=j+1}^{t}
    \sup_{z,b}|f_s(z,b)-f_{s-1}(z,b)|.
\end{equation}
Thus, a larger window provides more intervention traces and reduces finite-sample estimation noise, but may introduce stale-data bias when the policy changes. Conversely, a smaller window better tracks the current recoverability function, but relies on fewer samples and can yield noisier estimates. The window size and recency weights therefore balance finite-sample estimation error against policy-induced drift.

\begin{assumption}[Online recoverability tracking]
\label{assump:tracking_oracle}
The online regression procedure used by the recoverability controller produces estimates $\widehat f_t$ satisfying the cumulative tracking bound
\begin{equation}
\label{eq:tracking_oracle_bound}
    \epsilon_T
    :=
    \sum_{t=1}^{T}
    \mathbb{E}_{z_t}
    \left[
        \max_{b\in\mathcal{B}}
        |\widehat f_t(z_t,b)-f_t(z_t,b)|
    \right]
    \le
    \tilde O(\sqrt{TK}) + O(D_T) + O(A_T).
\end{equation}
\end{assumption}

Assumption~\ref{assump:tracking_oracle} is the main online tracking condition in the theorem. It states that the recoverability controller can follow the evolving intervention-utility function with controlled cumulative prediction error. It compactly captures the contextual-bandit reduction: under sufficient exploration, bandit feedback can be converted into estimable action-value information, and regret can be bounded through the quality of the induced prediction problem. This view is standard in contextual bandits with side information~\citep{langford2008epoch} and is developed more explicitly through cost-sensitive classification oracle reductions~\citep{agarwal2014taming}. The variation-budget literature motivates modeling policy-induced non-stationarity through the drift term $D_T$~\citep{besbes2014stochastic,besbes2015nonstationary}. In our analysis, the additive contribution of $D_T$ enters through the tracking condition in Assumption~\ref{assump:tracking_oracle}. Theorem~\ref{thm:rail_regret_oracle} then converts this tracking guarantee into an action-selection regret bound.

\textbf{Main regret theorem.}
Let $b_t^\star(z)=\arg\max_{b\in\mathcal{B}} f_t(z,b)$
be the best intervention at state $z$ under the current policy. RAIL selects interventions greedily with respect to $\widehat f_t$ except for the exploration mass in Assumption~\ref{assump:exploration}.

\begin{theorem}[Recoverability regret under online tracking]
\label{thm:rail_regret_oracle}
Under Assumptions~\ref{assump:bounded_finite}--\ref{assump:tracking_oracle}, the recoverability controller satisfies
\begin{equation}
\label{eq:rail_regret_tracking}
    \mathrm{Reg}_T
    \le
    2\epsilon_T+\sum_{t=1}^{T}\rho_t .
\end{equation}
In particular, if
\begin{equation}
\label{eq:epsilon_instantiation}
    \epsilon_T
    =
    \tilde O(\sqrt{TK})+O(D_T)+O(A_T),
    \qquad
    \sum_{t=1}^{T}\rho_t=\tilde O(\sqrt{TK}),
\end{equation}
then
\begin{equation}
\label{eq:rail_regret_bound}
    \mathrm{Reg}_T
    \le
    \tilde O(\sqrt{TK})+O(D_T)+O(A_T),
\end{equation}
where constant factors are absorbed into the $O(\cdot)$ and $\widetilde O(\cdot)$ notation. Dividing ~\eqref{eq:rail_regret_bound} by $T$ gives
\begin{equation}
\label{eq:average_regret_decomposition}
\frac{\operatorname{Reg}_T}{T}
\le
\frac{\widetilde O(\sqrt{TK})}{T}
+
\frac{O(D_T)}{T}
+
\frac{O(A_T)}{T}.
\end{equation}
Since the intervention space is finite and $K=|\mathcal B|$ is constant, the finite-action statistical and exploration term is $o(1)$. Moreover, if $D_T+A_T=o(T)$ with $D_T,A_T\ge 0$, then $D_T/T=o(1)$ and $A_T/T=o(1)$, so the drift and approximation terms also vanish. Therefore,
\begin{equation}
\label{eq:average_regret_vanishes}
    \frac{\mathrm{Reg}_T}{T}\to 0 .
\end{equation}
This shows that the controller may accumulate regret during training, but its average recoverability regret vanishes under sublinear recoverability drift and sublinear cumulative approximation error.
\end{theorem}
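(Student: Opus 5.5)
The plan is the standard greedy-with-estimates decomposition. Fix a round $t$ and a state $z_t$. Write $\hat b_t\in\arg\max_{b}\widehat f_t(z_t,b)$ for the greedy choice, $b_t^\star=b_t^\star(z_t)$ for the true maximizer, and $e_t(z):=\max_{b\in\mathcal{B}}|\widehat f_t(z,b)-f_t(z,b)|$ for the uniform estimation error at $z$.

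First, condition on whether the exploration draw fires. By Assumption~\ref{assump:exploration}, with probability $1-\rho_t$ the controller plays $b_t=\hat b_t$, and with probability $\rho_t$ it plays some exploratory $b$. On exploratory rounds I bound the instantaneous regret crudely. Assumption~\ref{assump:bounded_finite} gives $f_t\in[0,1]$, so $\max_b f_t(z_t,b)-f_t(z_t,b_t)\le 1$. The exploration rounds therefore contribute at most $\rho_t$ in expectation, and summing over $t$ gives the term $\sum_t\rho_t$.

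Second, on greedy rounds I use the three-term sandwich
\begin{equation*}
f_t(z_t,b_t^\star)-f_t(z_t,\hat b_t)
=\bigl[f_t(z_t,b_t^\star)-\widehat f_t(z_t,b_t^\star)\bigr]
+\bigl[\widehat f_t(z_t,b_t^\star)-\widehat f_t(z_t,\hat b_t)\bigr]
+\bigl[\widehat f_t(z_t,\hat b_t)-f_t(z_t,\hat b_t)\bigr].
\end{equation*}
The middle bracket is $\le 0$ by the definition of $\hat b_t$, and each outer bracket is at most $e_t(z_t)$. So the greedy-round regret is at most $2e_t(z_t)$, and multiplying by $(1-\rho_t)\le 1$ keeps this bound. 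Taking $\mathbb{E}_{z_t}$ and summing over $t$ gives $2\epsilon_T$ exactly as defined in Assumption~\ref{assump:tracking_oracle}, which proves \eqref{eq:rail_regret_tracking}.

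One subtlety deserves care. The live utility gate may return $\varnothing$, but the regret is defined only over intervened rounds. There the played action is either $\hat b_t$ or exploratory, so the decomposition applies verbatim. I would also state that $\widehat f_t$ is $\mathcal{H}_{t-1}$-measurable, so that conditioning on $z_t$ and the exploration coin is legitimate.

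Third, I instantiate the rates. I substitute Assumption~\ref{assump:tracking_oracle}'s bound $\epsilon_T\le\tilde O(\sqrt{TK})+O(D_T)+O(A_T)$ and Assumption~\ref{assump:exploration}'s $\sum_t\rho_t=\tilde O(\sqrt{TK})$, then absorb constants to obtain \eqref{eq:rail_regret_bound}. With $K$ fixed, $\sqrt{TK}/T\to0$. Assumptions~\ref{assump:approximation} and \ref{assump:drift} give $A_T/T,D_T/T\to0$, and dividing by $T$ yields \eqref{eq:average_regret_vanishes}. Theorem~\ref{thm:rail_regret} in the main text is the same statement with $K=|\mathcal{B}|$.

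The main obstacle is not in this argument itself, since the tracking bound is assumed. It lies in justifying Assumption~\ref{assump:tracking_oracle} from the windowed Huber regression \eqref{eq:rail_predictor_obj}. The route I would try is to split $e_t$ into three parts: a stale-data bias bounded by the drift telescoping \eqref{eq:drift_trace_bound} over a window $W$, giving $O(W D_T)$-type terms; an estimation term of order $\sqrt{K/(\rho W)}$ from the importance-weighted coverage provided by exploration; and the approximation error $A_T$. One would then tune $W$. For the stated theorem, however, I would keep this as an assumption and only note the heuristic.
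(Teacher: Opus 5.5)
Your proposal is correct and follows essentially the same argument as the paper. It uses the three-term greedy decomposition, whose non-positive middle term yields the $2\epsilon_T$ term; it bounds each exploration round crudely by $\rho_t$ using $f_t\in[0,1]$; and it then substitutes the assumed rates and divides by $T$. Your remarks on restricting to gated (intervened) rounds and on $\mathcal{H}_{t-1}$-measurability of $\widehat f_t$ are sensible clarifications that the paper leaves implicit, and, as in the paper, your sketch for justifying Assumption~\ref{assump:tracking_oracle} sits outside the proof proper.
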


\begin{proof}
First consider the greedy part of the controller. Let
$b_t=\arg\max_{b\in\mathcal{B}}\widehat f_t(z_t,b)$. For any $z_t$, we have
\begin{equation}
\label{eq:greedy_regret_decomposition}
\begin{aligned}
    f_t(z_t,b_t^\star)-f_t(z_t,b_t)
    &=
    f_t(z_t,b_t^\star)-\widehat f_t(z_t,b_t^\star)
    +\widehat f_t(z_t,b_t^\star)-\widehat f_t(z_t,b_t)  \\
    &\qquad
    +\widehat f_t(z_t,b_t)-f_t(z_t,b_t)  \\
    &\le
    2\max_{b\in\mathcal{B}}
    |\widehat f_t(z_t,b)-f_t(z_t,b)| .
\end{aligned}
\end{equation}
The first term is the estimation error at the truly optimal intervention $b_t^\star$; it can increase the bound when $b_t^\star$ is underestimated. The middle term is non-positive because $b_t\in\arg\max_{b\in\mathcal B}\widehat f_t(z_t,b)$, so $\widehat f_t(z_t,b_t)\geq\widehat f_t(z_t,b_t^\star)$. The final term is the estimation error at the selected intervention $b_t$; it can increase the bound when $b_t$ is overestimated. Thus, regret can arise from underestimating the optimal intervention or overestimating the selected intervention, and both effects are bounded by $\max_{b\in\mathcal B}|\widehat f_t(z_t,b)-f_t(z_t,b)|$. This gives the factor of two in~\eqref{eq:greedy_regret_decomposition}.

Exploration contributes at most $\rho_t$ regret at step $t$ because recoverability gains are bounded in $[0,1]$. Therefore,
\begin{equation}
\label{eq:regret_sum_bound}
\begin{aligned}
    \mathrm{Reg}_T
    &\le
    2\sum_{t=1}^{T}
    \mathbb{E}_{z_t}
    \left[
        \max_{b\in\mathcal{B}}
        |\widehat f_t(z_t,b)-f_t(z_t,b)|
    \right]
    +
    \sum_{t=1}^{T}\rho_t  \\
    &=
    2\epsilon_T+\sum_{t=1}^{T}\rho_t .
\end{aligned}
\end{equation}
This proves~\eqref{eq:rail_regret_tracking}. Substituting~\eqref{eq:epsilon_instantiation} gives~\eqref{eq:rail_regret_bound}. Finally,~\eqref{eq:average_regret_decomposition} shows that the average regret vanishes when $K$ is finite and $D_T+A_T=o(T)$, proving~\eqref{eq:average_regret_vanishes}.
\end{proof}

\textbf{Why fixed or restricted intervention rules can incur linear regret.}
The theorem above establishes that RAIL can track the best structured intervention when recoverability is learnable and policy-induced drift is controlled. We now show why fixed heuristics or restricted action spaces can still incur linear recoverability regret.

Let $\pi^{\mathrm{base}}$ be a baseline intervention policy. If $\pi^{\mathrm{base}}$ is randomized, $f_t(z_t,\pi^{\mathrm{base}}(z_t))$ denotes the expectation over the baseline's action distribution. Define its per-step recoverability gap as
\begin{equation}
\label{eq:baseline_gap}
    \gamma_t(\pi^{\mathrm{base}})
    =
    \mathbb{E}_{z_t}
    \left[
        \max_{b\in\mathcal{B}} f_t(z_t,b)
        -
        f_t(z_t,\pi^{\mathrm{base}}(z_t))
    \right].
\end{equation}

\begin{proposition}[Persistent mismatch implies linear recoverability regret]
\label{prop:linear_baseline_regret}
If there exists a constant $\gamma>0$ and a set of steps
$\mathcal{T}\subseteq\{1,\ldots,T\}$ with $|\mathcal{T}|=\Omega(T)$ such that
\begin{equation}
\label{eq:persistent_gap_condition}
    \gamma_t(\pi^{\mathrm{base}})\ge \gamma,
    \qquad
    \forall t\in\mathcal{T},
\end{equation}
then the baseline incurs linear recoverability regret:
\begin{equation}
\label{eq:linear_baseline_regret}
    \mathrm{Reg}_T(\pi^{\mathrm{base}})
    =
    \sum_{t=1}^{T}\gamma_t(\pi^{\mathrm{base}})
    \ge
    \gamma|\mathcal{T}|
    =
    \Omega(T).
\end{equation}
\end{proposition}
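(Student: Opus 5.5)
The argument is direct: unfold the definition of the baseline's recoverability regret, then lower-bound the sum by keeping only the steps in $\mathcal{T}$. First, identify $\mathrm{Reg}_T(\pi^{\mathrm{base}})$ with $\sum_{t=1}^T\gamma_t(\pi^{\mathrm{base}})$. This follows by substituting $b_t=\pi^{\mathrm{base}}(z_t)$ into the regret definition \eqref{eq:app_recoverability_regret} and comparing with \eqref{eq:baseline_gap}. If $\pi^{\mathrm{base}}$ is randomized, linearity of expectation lets the expectation over the baseline's action distribution pass inside $f_t$, which matches the convention stated before the proposition, so the equality still holds.

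Second, note that each summand is nonnegative. For every $z_t$ and every action, or mixture of actions, $\max_{b\in\mathcal{B}} f_t(z_t,b)\ge f_t(z_t,\pi^{\mathrm{base}}(z_t))$, so after taking expectations $\gamma_t(\pi^{\mathrm{base}})\ge 0$ for all $t$. Since $\mathcal{B}$ is finite by Assumption~\ref{assump:bounded_finite}, the maximum is attained, and boundedness of $f_t$ in $[0,1]$ makes all expectations finite.

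Third, drop the steps outside $\mathcal{T}$, which is allowed by nonnegativity, and apply the hypothesis \eqref{eq:persistent_gap_condition} on $\mathcal{T}$:
\[
\sum_{t=1}^T\gamma_t(\pi^{\mathrm{base}})\ \ge\ \sum_{t\in\mathcal{T}}\gamma_t(\pi^{\mathrm{base}})\ \ge\ \gamma|\mathcal{T}|.
\]
Since $\gamma$ is a constant independent of $T$ and $|\mathcal{T}|\ge cT$ for some $c>0$ and all large $T$, this gives $\gamma|\mathcal{T}|\ge \gamma c T=\Omega(T)$.

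There is no real obstacle here. The only point that needs care is the randomized-baseline case: one must check that the gap is defined with the expectation inside $f_t$, so that the identification in the first step is exact and nonnegativity survives. A closing remark can then contrast this with Theorem~\ref{thm:rail_regret_oracle}, where sublinear regret is obtained when $D_T+A_T=o(T)$, to show the separation between the two regimes.
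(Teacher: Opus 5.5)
Your proposal is correct and follows the paper's proof: expand $\mathrm{Reg}_T(\pi^{\mathrm{base}})$ as $\sum_t \gamma_t(\pi^{\mathrm{base}})$, discard the steps outside $\mathcal{T}$, and apply the per-step bound $\gamma$. You also check explicitly that $\gamma_t(\pi^{\mathrm{base}})\ge 0$, including for randomized baselines, which the paper's first inequality uses without stating it.
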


\begin{proof}
The result follows by summing over the subset $\mathcal{T}$:
\begin{equation}
\label{eq:linear_baseline_proof}
    \mathrm{Reg}_T(\pi^{\mathrm{base}})
    =
    \sum_{t=1}^{T}\gamma_t(\pi^{\mathrm{base}})
    \ge
    \sum_{t\in\mathcal{T}}\gamma_t(\pi^{\mathrm{base}})
    \ge
    \gamma|\mathcal{T}|
    =
    \Omega(T).
\end{equation}
\end{proof}

\textbf{Application to representative baselines.}
Proposition~\ref{prop:linear_baseline_regret} is a conditional lower bound: it does not claim that every baseline always fails, but identifies the condition under which a baseline accumulates linear recoverability regret.

\begin{itemize}
    \item \textbf{Fixed uncertainty-triggered branching, e.g., ARPO-style methods.}
    Such methods use a fixed proxy, such as entropy or uncertainty, to decide when to branch. If this proxy remains persistently misaligned with recoverability. For example, high-uncertainty states are unrecoverable, or low-uncertainty states require recovery intervention. Then the method has a constant per-step recoverability gap on those states. Proposition~\ref{prop:linear_baseline_regret} then gives linear regret.

    \item \textbf{Prompt-level scalar allocation, e.g., VIP-style methods.}
    VIP is principled for prompt-level rollout allocation and provides important theoretical motivation for variance-informed budget control~\citep{nguyen2026adaptive}. However, relative to RAIL's structured comparator, VIP selects \emph{how many} rollouts to allocate to a prompt, not \emph{where} or \emph{how} to intervene inside a trajectory. If the best recoverability gain comes from structured choices outside the scalar allocation space, such as branching at a specific trajectory state, changing decoding behavior, or applying a recovery intervention, then scalar allocation has an irreducible representation gap. If this gap persists on a linear number of states, Proposition~\ref{prop:linear_baseline_regret} implies linear regret with respect to the structured-intervention comparator.

    \item \textbf{Static or proxy-driven intervention policies, e.g., AEPO-style methods.}
    More generally, any method that maps proxy signals to intervention decisions without updating from realized recoverability outcomes is vulnerable to persistent proxy mismatch. If the proxy-to-gain relation shifts as $\pi_{\theta_t}$ evolves, the method cannot correct its intervention policy from feedback, and its cumulative recoverability regret is linear whenever the mismatch persists.
\end{itemize}

\textbf{Takeaway.}
The regret analysis separates three sources of difficulty: statistical exploration over the finite intervention space, approximation error of the recoverability controller, and policy-induced drift. RAIL achieves sublinear recoverability regret when the structured intervention utility is learnable and changes slowly enough to be tracked. Fixed heuristics and scalar-only allocation methods can incur linear regret whenever their proxy or action-space restriction creates a persistent gap to the best structured intervention. This formalizes the two core gaps in the main text: the non-stationary gap appears through $D_T$, and the non-scalar gap appears through irreducible representation mismatch.

\section{Discussions}
\subsection{Intervention-Shaped Sampling and Policy Updates}
\label{app:policy_loss_distribution}

RAIL changes the distribution from which training trajectories are collected. Given an intervention $b$, a continuation is sampled from an intervention-shaped behavior distribution
\[
    y \sim Q_{\theta_{\mathrm{old}},b}(\cdot\mid z),
\]
where $Q_{\theta_{\mathrm{old}},b}$ is induced by the current policy together with the selected branch point, branch budget, and decoding regime, including its temperature and top-$p$ configuration. 
%The policy update, however, does not treat $Q_{\theta_{\mathrm{old}},b}$ as an explicit behavior policy. Generation-time log-probabilities are not used in the loss, and importance-sampling correction for the intervention-shaped decoder is disabled. Instead, the trainer recomputes token log-probabilities under the training policy and applies the standard group-relative update:
The policy update still uses the standard GRPO policy ratio between the current policy and the detached pre-update policy. However, it does not use
\(Q_{\theta_{\mathrm{old}},b}\) as the explicit behavior policy in the
denominator. In other words, RAIL does not add an additional
importance-correction term for the intervention-shaped decoder beyond the
standard GRPO ratio. Instead, the trainer recomputes token log-probabilities
under the training policy and applies the standard group-relative update:
$\widehat{\mathcal{L}}_{\mathrm{RAIL}}(\theta)
    =
    -\frac{1}{|\mathcal{Y}|}
    \sum_{y_i\in\mathcal{Y}}
    \widehat{A}_i
    \sum_{\tau}
    \log \pi_{\theta}
    \!\left(
        y_{i,\tau}
        \mid z_i,y_{i,<\tau}
    \right)$,
up to the clipping and masking operations of the underlying GRPO implementation. With the single inner optimization iteration used in our experiments, the reference log-probabilities are detached copies of the current training-policy log-probabilities, so the corresponding policy ratio is one at the update. Thus, branch trajectories generated with different temperatures or top-$p$ values enter the policy loss without temperature reweighting or a denominator based on $Q_{\theta_{\mathrm{old}},b}$.

Accordingly, RAIL should be understood as \emph{intervention-shaped data collection with an approximate on-policy update}. It is on-policy in the limited sense that trajectories are produced by the current policy parameters and are consumed immediately, rather than replayed from an old policy checkpoint. It is not exactly on-policy with respect to the token-level sampling distribution, because the intervention changes that distribution while the loss is evaluated using $\pi_\theta$. Likewise, our statement that RAIL leaves the optimization objective unchanged refers to the algebraic form of the critic-free policy loss and advantage estimator; RAIL does change the empirical rollout distribution over which that loss is evaluated. One could instead define a corrected update using
\[
    \frac{
        \pi_{\theta}(y\mid z)
    }{
        Q_{\theta_{\mathrm{old}},b}(y\mid z)
    },
\]
or its token-level analogue. Such a correction would require tracking the complete intervention-induced behavior probability, including temperature scaling, top-$p$ renormalization, anchor selection, and branching decisions. It would also require suitable support conditions: truncated decoding distributions such as top-$p$ sampling do not generally provide full support for the original policy distribution. Sequence-level ratios can consequently be unavailable or high-variance, potentially undermining the stable group-relative update that RAIL is designed to preserve.

We therefore adopt the uncorrected formulation deliberately. It isolates RAIL's contribution to the rollout-generation layer, remains compatible with standard critic-free optimization implementations, and avoids introducing a separate high-variance off-policy correction problem. The resulting update should not be interpreted as an unbiased estimator of the original-policy expectation under arbitrary interventions; rather, it optimizes the standard GRPO loss on a selectively reshaped set of trajectories. Developing behavior-corrected intervention learning with explicit support control is a complementary direction beyond the scope of this work.

\subsection{Recency-Weighted Tracking under Policy Drift}
\label{app:window_tradeoff}

The recoverability controller learns from historical intervention traces, while its target evolves with the policy. A trace collected at step $j<t$ reflects
$f_j(z,b)=\Delta_{\theta_j}(z,b)$, whereas the controller at step $t$ must predict $f_t$. Retaining more traces reduces statistical estimation error, but older traces may become stale. This is the standard remembering--forgetting trade-off in non-stationary online learning: longer histories improve statistical precision, whereas shorter histories improve responsiveness to change~\citep{besbes2014stochastic}. We summarize policy-induced non-stationarity by
\[
    \delta_t
    =
    \sup_{z,b}
    \left|
        f_t(z,b)-f_{t-1}(z,b)
    \right|,
    \qquad
    D_T=\sum_{t=2}^{T}\delta_t .
\]
This variation measure allows recoverability to change smoothly, abruptly, or non-uniformly over training. Its role is to capture the total amount of decision-relevant change, rather than impose a specific temporal model. The additive $O(D_T)$ term in our tracking condition follows from bounded-memory estimation. A trace collected at step $j$ differs from the current target by at most
\[
    \sup_{z,b}|f_t(z,b)-f_j(z,b)|
    \le
    \sum_{s=j+1}^{t}\delta_s .
\]
Under a fixed sliding window or exponential recency weighting with bounded effective memory, each local change $\delta_s$ influences only a bounded amount of future training weight. Consequently, the cumulative stale-target error is bounded by a constant multiple of
$\sum_s\delta_s=D_T$, and this constant is absorbed into the $O(D_T)$ term. The controller's cumulative tracking error can therefore be interpreted as the sum of statistical estimation error, approximation error, and drift-induced stale supervision.

We emphasize that the condition $D_T=o(T)$ expresses the stability regime required for online tracking. It does not require any particular drift pattern; it only requires that recoverability does not change by a persistent constant amount over a linear number of steps. If instead $D_T=\Omega(T)$, historical outcomes may become obsolete as quickly as they are collected, and vanishing average regret against a dynamic intervention oracle is generally impossible without stronger structure. \textbf{Nevertheless, such instability would also undermine the premise that successive policy updates can reuse experience from earlier policies, therefore the discussion of controller's regret becomes meaningless.} Accordingly, $O(D_T)$ should be understood as a sufficient tracking abstraction for RAIL's bounded-memory controller, not as a minimax characterization of arbitrary non-stationary contextual bandits. In practice, RAIL implements this abstraction through exponential recency weighting: recent traces receive greater weight, while older traces decay gradually, balancing statistical stability against adaptation to the evolving policy.

\subsection{Entropy-Guided Anchor Selection in Shadow Mode}
\label{app:anchor_selection}

Entropy is widely used for step-level exploration because it is available before outcome feedback and identifies decisions where the policy assigns substantial probability to competing continuations~\citep{zheng2025first, wei2026entropy, dong2026agentic, dong2025agentic}. We acknowledge the limitation that the reliance on a shadow warm-up that selects high-entropy anchors may bias the controller’s early supervision toward specific uncertainty patterns and could limit coverage of recoverable but low-entropy states. Naturally, low-entropy states may still be recoverable, for example when the policy is confidently wrong in rare cases; however, exhaustive search over all states would be computationally infeasible and substantially dilute the limited shadow budget. Under a limited shadow-mode budget, high-entropy states provide a practical high-yield proposal set: branching at locally uncertain decisions is more likely to generate diverse continuations than branching indiscriminately across the trajectory. Importantly, RAIL uses entropy only to \emph{propose} anchors, not to determine their recoverability. At each proposed anchor, the iterative procedure evaluates the realized gain of increasing branch budgets and stops once additional branching is no longer useful. Thus, intervention outcomes, rather than entropy itself, determine which trials are retained and provide both positive and negative supervision for the controller.  These trajectories are not discarded from policy optimization, but are simply not prioritized for additional branching during controller bootstrapping. The entropy filter therefore represents a deliberate coverage-efficiency trade-off, while iterative outcome-based validation limits dependence on the heuristic proposal signal.

\section{Prompt Design}
In this section, we provide the prompts for all four agentic benchmarks. We utilize a Reasoning-Then-Act template for the training. These components are essential for interpretability and reproducibility.

% =========================================================
% AgentBench OS
% =========================================================
\begin{figure*}[h]
\centering
\begin{promptbox}[AgentBench OS Prompt]{promptteal}

\promptsection{Role and Goal}
\begin{promptverb}
You are an AI assistant with full access to a Linux (Ubuntu) operating system through bash commands. You can execute bash commands to gather information, navigate directories, read files, and manipulate data. Your goal is to solve the given task using the provided tools.
\end{promptverb}

\promptsection{Available Tools}
\begin{promptverb}
- bash_action: Execute bash commands in the Linux environment. Use this tool to gather information and perform the requested operations.
- answer_action: Submit the final answer when the result is known.
\end{promptverb}

\promptsection{Interaction Protocol}
\begin{promptverb}
Before calling a tool, briefly state your reasoning in one sentence explaining what you plan to do and why. Then immediately call exactly ONE tool using <tool_call> tags.

Every turn must contain exactly one tool call. Use bash_action for commands such as grep, find, cat, awk, wc, and ls. Do not use interactive commands or commands requiring user input. When the answer is known, call answer_action with only the exact result value, such as "42" rather than "the answer is 42". If a command fails, use the error to choose a different approach.
\end{promptverb}

\promptsection{Examples}
\begin{promptverb}
User: How many .txt files are in /home?
Assistant: I need to find and count all .txt files in /home using find and wc.
<tool_call>
{"name":"bash_action","arguments":{"script":"find /home -name '*.txt' -type f | wc -l"}}
</tool_call>

Tool: 5
Assistant: The command returned 5, which is the answer.
<tool_call>
{"name":"answer_action","arguments":{"answer":"5"}}
</tool_call>

Always provide brief reasoning before calling a tool.
\end{promptverb}

\end{promptbox}
\vspace{-10pt}
\caption{System prompt used for \textbf{AgentBench OS}. Section labels are added only for presentation and organize the role, available tools, interaction protocol, and demonstrations.}
\label{fig:prompt_os}
\end{figure*}

% =========================================================
% AgentBench DB
% =========================================================
\begin{figure*}[t]
\centering
\begin{promptbox}[AgentBench DB Prompt]{promptblue}

\promptsection{Role and Goal}
\begin{promptverb}
You are an AI assistant with access to a MySQL database. You can execute SQL queries to retrieve data or modify the database. Your goal is to solve the given task using the provided tools.
\end{promptverb}

\promptsection{Available Tools}
\begin{promptverb}
- sql_query: Execute SELECT, INSERT, UPDATE, or DELETE queries.
- answer_action: Submit the final answer.
\end{promptverb}

\promptsection{Interaction Protocol}
\begin{promptverb}
Before calling a tool, briefly state your reasoning in one sentence explaining what you plan to do and why. Then immediately call exactly ONE tool using <tool_call> tags.

Always use backticks around table and column names. For SELECT tasks, call answer_action with only the query result value, such as "John Smith", "42", or "26.0". For INSERT, UPDATE, or DELETE tasks, first execute the write query using sql_query and then call answer_action with a short confirmation such as "done"; the grader evaluates the resulting database state rather than the answer text.

If a query fails, use the error to construct a corrected query and do not repeat the same query. If a successful query returns an empty result, verify the schema, table and column names, filter value, case sensitivity, exact-string matching, and data types. In particular, numerical comparisons over text columns may require CAST. Return "[]" or "No results" only when you are confident that the result is genuinely empty.
\end{promptverb}

\promptsection{Question Interpretation}
\begin{promptverb}
Some questions use automatically generated phrasing. "Name the X", "Tell me X", "Name the most X", and "Name the least X" ask for the VALUE of column X, which may be a number, date, or string, rather than a person's name. For example, "Name the most age" asks for the maximum age value. "How many X" expects only a count value.
\end{promptverb}

\promptsection{Examples}
\begin{promptverb}
User: What is the name of the person with ID 42 from the "User Data" table?
Assistant: I will retrieve the name associated with ID 42.
<tool_call>
{"name":"sql_query","arguments":{"query":"SELECT `name` FROM `User Data` WHERE `id` = 42"}}
</tool_call>

Tool: John Smith
Assistant: The query returned John Smith, so that is the answer.
<tool_call>
{"name":"answer_action","arguments":{"answer":"John Smith"}}
</tool_call>

Begin by understanding the task.
\end{promptverb}

\end{promptbox}
\vspace{-10pt}
\caption{System prompt used for \textbf{AgentBench DB}. Section labels are added only for presentation and organize the database interaction rules, answer protocol, question interpretation, and demonstrations.}
\label{fig:prompt_db}
\end{figure*}

% =========================================================
% WebShop
% =========================================================
\begin{figure*}[t]
\centering
\begin{promptbox}[WebShop Prompt]{promptorange}

\promptsection{Role and Goal}
\begin{promptverb}
You are a shopping agent in a text-based online store. Your task is to purchase the ONE product that best matches the user's instruction, including the requested product type, attributes, options, and price.
\end{promptverb}

\promptsection{Available Tools}
\begin{promptverb}
- search_action: Search the store using keywords when a search bar is available.
- click_action: Click an exact label shown in the current observation, including a product ID, option value, navigation control, information tab, or "Buy Now".
- answer_action: End the session when purchasing cannot be completed. Normally, prefer completing the task through "Buy Now".
\end{promptverb}

\promptsection{Store Interaction}
\begin{promptverb}
The current observation lists all clickable elements. Begin by searching with focused keywords derived from the instruction. Search results contain product IDs, titles, and prices. Click a promising product ID to open its page, inspect its description or features when necessary, select every requested option such as color or size, and click "Buy Now" to complete the purchase.

You are evaluated by how well the purchased product satisfies all requested attributes and options while remaining within the price limit. Refine the search when the results are poor, and click only labels that appear in the current observation.
\end{promptverb}

\promptsection{Interaction Protocol}
\begin{promptverb}
Every turn must contain one brief sentence of reasoning followed immediately by exactly ONE tool call using <tool_call> tags.
\end{promptverb}

\promptsection{Example}
\begin{promptverb}
Assistant: I will search for the requested product using its main attributes.
<tool_call>
{"name":"search_action","arguments":{"query":"machine washable decorative pillows 28 inch"}}
</tool_call>

Tool: [Results] B07XYZ1234  Machine Washable Decorative Throw Pillows.
Assistant: B07XYZ1234 matches the request, so I will open it.
<tool_call>
{"name":"click_action","arguments":{"button":"B07XYZ1234"}}
</tool_call>

Tool: [Product] options: color [blue] [gray]; size [28" x 28"] ... [Buy Now]
Assistant: I will select the requested size before purchasing.
<tool_call>
{"name":"click_action","arguments":{"button":"28\" x 28\""}}
</tool_call>

Tool: [Product] selected size: 28" x 28" ... [Buy Now]
Assistant: The product and selected options match the instruction, so I will complete the purchase.
<tool_call>
{"name":"click_action","arguments":{"button":"Buy Now"}}
</tool_call>


Begin.
\end{promptverb}

\end{promptbox}
\vspace{-10pt}
\caption{System prompt used for \textbf{WebShop}. Section labels are added only for presentation and organize the shopping objective, available actions, environment workflow, and tool-calling protocol.}
\label{fig:prompt_webshop}
\end{figure*}

% =========================================================
% ToolQA-Coffee
% =========================================================
\begin{figure*}[t]
\centering
\begin{promptbox}[ToolQA-Coffee Prompt]{promptviolet}

\promptsection{Role and Data}
\begin{promptverb}
You answer questions about coffee prices using a Python interpreter. A pandas DataFrame named df is preloaded with the columns Date, Open, High, Low, Close, and Volume. Date is a string in YYYY-MM-DD format; Open, High, Low, and Close are floating-point values; and Volume is an integer. You must query df to obtain the requested values rather than answering from memory.
\end{promptverb}

\promptsection{Available Tools}
\begin{promptverb}
- python_interpreter: Execute Python code with df and pandas available. Assign the computed result to a variable named ans.
- answer_action: Submit only the final answer value.
\end{promptverb}

\promptsection{Interaction Protocol}
\begin{promptverb}
On each turn, provide one sentence of reasoning followed immediately by exactly ONE tool call using <tool_call> tags. Use python_interpreter to query or compute over df and assign the result to ans. After observing the result, call answer_action with only the value, such as a number, "bullish", or "bearish".
\end{promptverb}

\promptsection{Tool-Call Formatting}
\begin{promptverb}
The <tool_call> body must be valid JSON. The value of "code" must be one double-quoted string. Use single quotes inside the Python code, encode internal line breaks as \n, and always close the code string with a final double quote before the closing braces.
\end{promptverb}

\promptsection{Example}
\begin{promptverb}
User: Was 2015-07-29 a bullish or bearish day for coffee?
Assistant: I will compare the closing and opening prices for that date.
<tool_call>
{"name":"python_interpreter","arguments":{"code":"row = df[df.Date == '2015-07-29']\nans = 'bullish' if row.Close.iloc[0] > row.Open.iloc[0] else 'bearish'\nprint(ans)"}}
</tool_call>

Tool: bearish
Assistant: The computed result is bearish.
<tool_call>
{"name":"answer_action","arguments":{"answer":"bearish"}}
</tool_call>

Begin.
\end{promptverb}

\end{promptbox}
\vspace{-10pt}
\caption{System prompt used for the \textbf{ToolQA-Coffee} benchmark. Section labels are added only for presentation and organize the preloaded data interface, available tools, execution protocol, JSON constraint, and demonstration.}
\label{fig:prompt_toolqa}
\end{figure*}

\end{document}